\title{Corridor Geometry in Gradient-Based Optimization}
\author{
Benoit Dherin$^*$ \\
Google \\
dherin@google.com
\and
{\bf Mihaela Rosca}$^*$ \\
Google DeepMind, London \\
mihaelacr@google.com
}
\newtheorem{theorem}{Theorem}[section]
\newtheorem{lemma}[theorem]{Lemma}
\newtheorem{definition}[theorem]{Definition}
\newtheorem{remark}[theorem]{Remark}
\newtheorem{example}[theorem]{Example}
\newtheorem{proposition}[theorem]{Proposition}
\newcommand{\R}[0] {\mathbb R}
\begin{document}

\maketitle
\def\thefootnote{*}\footnotetext{Equal contribution}\def\thefootnote{\arabic{footnote}}

\begin{abstract}%
We characterize regions of a loss surface as corridors when the continuous curves of steepest descent---the solutions of the gradient flow---become straight lines. We show that corridors provide insights into gradient-based optimization, since corridors are exactly the regions where gradient descent and the gradient flow follow the same trajectory, {\color{black} while the loss decreases linearly}.
As a result, inside corridors there are no implicit regularization effects or training instabilities that have been shown to occur due to the drift between gradient descent and the gradient flow. {\color{black}
Using the loss linear decrease on corridors, we devise a learning rate adaptation scheme for gradient descent; we call this scheme Corridor Learning Rate (CLR). The CLR formulation coincides with a special case of Polyak step-size, discovered in the context of convex optimization. The Polyak step-size has been  shown recently to have also good convergence properties for neural networks; we further confirm this here with results on CIFAR-10 and ImageNet.
}
\end{abstract}

\section{Introduction}

In recent years, a fruitful approach to understanding optimization in deep learning has come from studying the discrepancy between gradient descent (GD)---at the basis for most modern deep learning optimizers---and its continuous-time counterpart, the gradient flow (GF). On one hand, the  discrepancy between GD and the GF has been used to find  sources of implicit regularization and training instabilities~\citep{gunasekar2017implicit,soudry2018implicit,barrett2021implicit,smith2021on,ghosh2023implicit,symmetry,miyagawatoward,rosca2023implicit,rosca2021discretization,rosca2023on} and with it novel insights into the effect of learning rates, batch sizes, and neural landscapes. 
On the other hand, it has been empirically observed that there are regions in neural network training where GD closely follows the GF \citep{cohen2021gradient,rosca2023on,elkabetz2021continuous_vs_discrete}, with a theoretical bound between the trajectory of GD and the GF constructed by \citet{elkabetz2021continuous_vs_discrete}.
Taken together, these results showcase the importance of understanding the conditions of the loss landscape under which GD and the GF follow the same trajectory, which can unlock \textit{when} implicit regularization and training instabilities are expected to occur. To this end, \citet{rosca2023on} find a sufficient condition based on the loss gradient and Hessian for when GD follows the GF exactly.

Our main contribution is to add to this body of work, by
\textit{characterizing the local geometry of regions where the trajectories of GD coincide with those of the GF for general loss surfaces}.  Geometrically, we find that these regions exactly coincide with regions where the solutions of the GF become straight lines, regions we call \textbf{corridors}. We show that a necessary and sufficient condition for a region to be a corridor is that the image $Hg$ of loss gradient $g$ by the loss Hessian $H$ vanishes.  
Although exact corridors (i.e. where $Hg$ is exactly zero) may not exist on neural loss landscapes, we believe that understanding corridors can provide an insight into optimization and implicit regularization in deep learning, especially in regions where $Hg$ is small (as opposed to zero). 
Based on the assumption that the loss landscape is a corridor, 
we devise a learning rate adaptation scheme for GD,  which we call the Corridor Learning Rate (CLR); inside corridors, the CLR converges to the optimum in one step. 
Outside exact corridors, we find empirically that 
 CLR converges on multiple datasets (CIFAR-10~\citep{cifar10} and Imagenet~\citep{deng2009imagenet}) and multiple neural network architectures. {\color{black} Perhaps surprisingly, the CLR coincides with a special case of Polyak step-size for GD, introduced by \citet{polyak1987introduction} as optimal in the non-smooth convex case. The Polyak step-size has been shown to also be optimal in other convex settings (e.g. smooth and strongly convex) by \citet{hazan2019revisiting}. Recently, a stochastic variant of the Polyak step-size has been used in the context of neural optimization where, under mild assumptions, theoretical convergence guaranties have been obtained by \citet{loizou2021stochastic}; the authors observe fast convergence 
on MNIST~\citep{lecun1989backpropagation}, CIFAR-10~\citep{cifar10}, CIFAR-100~\citep{cifar10}, for the ResNet \cite{resnets} and DenseNet \cite{huang2017densenet} architectures as well as for matrix factorization.
Our experiments support their evidence on efficiency of this learning rate formulation in neural network training.
We  note that the CLR is not the focus of this work, nor do we aim to make it competitive with existing approaches as in  \citet{orvieto2022dynamics}: our aim is to explore the notion of corridor and demonstrate its usefulness  in the context of deep learning}. 

\section{Corridors}

In this section, we define the geometrical notion of corridor, provide an equivalent analytic criterion for corridors, and show that corridors coincide with regions where GD and the GF follow the same trajectory. 
Our results are valid for twice differentiable loss functions $E:\R^n\rightarrow \R$. We denote by $g(\theta) = \nabla E(\theta)$ the loss gradient and its Hessian by $H(\theta) = \nabla\nabla E(\theta)$. In deep learning $\theta$ denotes the parameter of a neural network, and $E(\theta)$ denotes a measure of discrepancy between neural network predictions and the ground truth averaged over all the examples in the training set. With our notation, the GF differential equation becomes
$
    \dot \theta = -g(\theta).
$
Its solutions $\theta(t)$ are given by the unique curves of continuous steepest descent.

\vspace{0.2cm}

\begin{definition}
We define a {\bf corridor} as a domain $U\subset \R^n$ of a loss surface with loss $E:\R^n\rightarrow \R$ iff the solutions of the GF (i.e., the curves $\theta(t)$ satisfying $\dot \theta(t) = -g(\theta(t))$ are straight lines  {\color{black} traveled at constant speed on $U$, (i.e.,   $\theta(t) = \theta(0) + t v$ for a velocity vector $v$)}. Inside corridors, we thus refer to the GF solutions as the {\bf lines of steepest descent}.
\label{def:corridor}
\end{definition}

The main theorem in this section states that a subset $U$ is a corridors  if and only if ${H(\theta) g(\theta) = 0, \forall \theta \in U}$ (Theorem~\ref{thm:corridor}). We first start with a few corridor examples, to provide an intuition about what they are and we motivate our main theorem by observing that $Hg=0$ for all these examples.

{\color{black}

\subsection{Examples of corridors}

{\color{black}
\begin{example} (Linear pieces.)
In dimension one, only linear pieces are corridors. To see why, consider the loss $E:\R\rightarrow \R$, and $\theta(t)$ the solution of the GF i.e., $\dot \theta = - \nabla E(\theta)$. Consider the time interval $[0, T]$ under which $E$ is a corridor; we show that $\nabla E(\theta)$ is constant in that interval. 
By the corridor assumption, we have that $\theta(t) = \theta(0) + t v,$  satisfies the GF equation for a certain $v$, which implies that $\nabla E(\theta(t))$ is constant (equals to $v$) for all $t \in [0, T]$, 
and thus $E$ is linear in $[\theta(0), \theta(T)]$. Since for a linear function, its Hessian vanishes, we have that $Hg = 0$ on the corridor.

\end{example}
}

\vspace{0.2cm}

\begin{example}(Convex n-dimensional cones.)
The loss $E: \R^d \rightarrow \R$,  $E(\theta) = \alpha \|\theta\|$ (where $\|\cdot\|$ is the $l_2$ norm) has no linear piece, but it is a corridor. This corridor can be described as the collection of half-lines passing through a point in the circle ${C = \{(\theta,1):\: \|\theta\|^2 = 1/\alpha^2\}\subset \R_{\theta}\times\R}$ and stopping at the origin $(0,0)$. The projection of these lines on the parameter plane $\R_{\theta}$ are the lines of steepest descent and the solutions of the GF.
The $n$-dimensional cone is a convex surface with a single global minima.
Note that the gradient of $E(\theta) = \alpha \|\theta\|$ is $g = \alpha \frac{\theta}{\|\theta\|}$, and its Hessian is $H = \alpha(\frac{1}{\|\theta\|} - 
\frac{\theta\theta^T}{\|\theta\|^3})$ and thus $Hg = 0$, except at the origin.
\end{example}

\vspace{0.2cm}

\begin{example}(Concave n-dimensional Cones.)
Turning the previous example on its head, we obtain the concave n-dimensional cone with loss function $E(\theta) = \beta - \alpha \|\theta\|_2$. A similar computation shows that $Hg = 0$, except at the origin.
\label{ex:inverted_cone}
\end{example}

\vspace{0.2cm}

\begin{example} (Ruled-surfaces.)
Both the 2-dimensional convex and concave cones are examples of \emph{ruled-surfaces} \citep{wiki_ruled_surface}, which are surfaces defined by the property that there is always a line lying on the surface through each of its points as in Figure \ref{fig:individual_corridors}. As Proposition \ref{prop:linear_loss} below shows, in $\R^2$, if a region is a corridor then it is ruled-surface (the converse being not necessarily true) allowing for much more complicated shapes than just linear pieces. %
\end{example}

\vspace{0.2cm}

\begin{example}(Multiple corridors).
The previous examples were examples of loss surfaces entirely formed by a single corridor. We can glue corridors together to obtain fairly complicated loss surfaces with a hierarchical structure.
Figure \ref{fig:multiple_corridors} displays such an example, where wavy convex and concave cones are glued to pieces of ruled-surfaces and linear pieces. Note that $Hg = 0$ holds almost everywhere except on a set measure null, corresponding to the gluing lines, where the loss is possibly not differentiable. This example is non-convex (and non-smooth at the gluing lines). 
\end{example}

\vspace{0.2cm}

\begin{example}(Quadratic Counter-example.)
Quadratic losses do not possess corridors. 
For instance, the solutions of  the GF for the loss $E(\theta) = \theta^2/2$ are lines  toward the origin $\theta(t) = \theta e^{-t}$, but are travelled at non-constant speed $\dot \theta (t) = \theta e^{-t}$. Note that $H g = \theta$ does not vanish, except a $0$. See Appendix \ref{appendix:quadratic} for a proof that quadratic functions in higher dimensions do not possess corridors. 
\label{ex:quadratic}
\end{example}

}

\begin{figure}[t]
\centering
\begin{subfloat}[Individual corridors.]{
\includegraphics[width=0.2\textwidth]{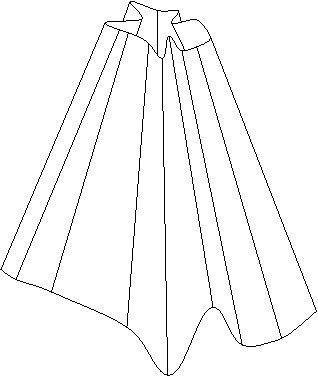}%
\includegraphics[width=0.15\textwidth]{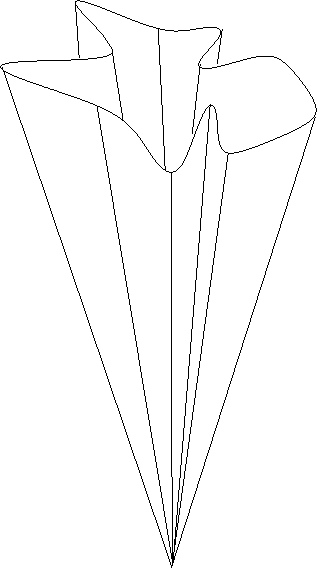}%
\label{fig:individual_corridors}
}
\end{subfloat}
\begin{subfloat}[Loss surface formed from multiple corridors.]{
\includegraphics[width=0.45\textwidth]{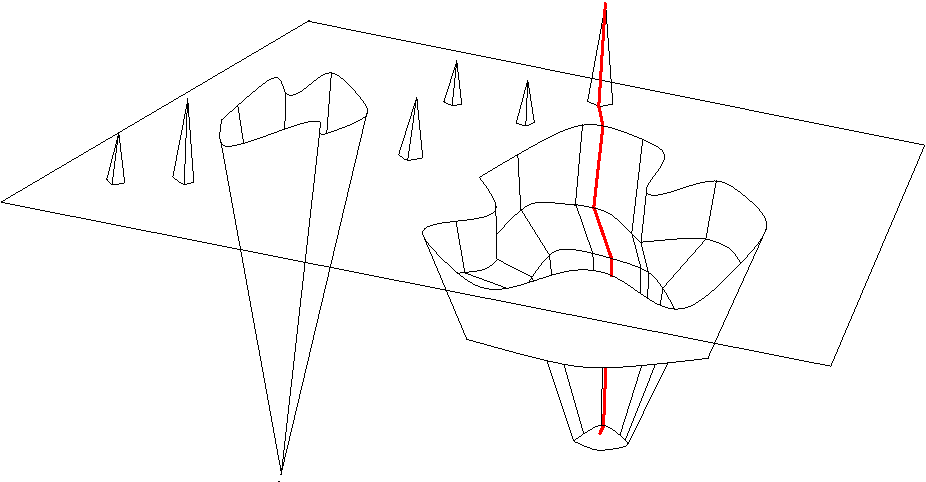}%
\label{fig:multiple_corridors}
}
\end{subfloat}
\caption{
Visualizing corridors. 
\protect\subref{fig:individual_corridors}: 
Ruled surfaces forming a corridor, with example lines of steepest descent.
\protect\subref{fig:multiple_corridors}: 
A line of steepest descent (shown in red) formed on a loss surface constructed from multiple corridors. 
}
\label{fig:ruled_surfaces}
\end{figure}

\subsection{Characterizing corridors: a necessary and sufficient condition}

Theorem~\ref{thm:corridor} below gives an analytical characterization of corridors in terms of the Hessian. Before stating it, let us start with two lemmas:

\begin{lemma}\label{lemma:hessian_ode}
Suppose that $\theta(t)$ is a solution of $\dot \theta = - g(\theta)$. Then $-Hg$ measures the rate of change of the loss gradient under the GF: 
\begin{equation}\label{eq:hessian_ode}
    \frac{d}{dt}g(\theta(t)) = - H(\theta(t)) g(\theta(t)).
\end{equation}
\end{lemma}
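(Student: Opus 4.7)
The plan is to derive the identity by a direct chain-rule computation, treating $g:\R^n\to\R^n$ as a vector-valued map and differentiating the composition $t\mapsto g(\theta(t))$ along a GF trajectory.

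First I would note that since $E$ is twice differentiable, the map $g=\nabla E$ is continuously differentiable and its Jacobian is precisely the Hessian: $Dg(\theta)=\nabla\nabla E(\theta)=H(\theta)$. This is the key input; everything else is mechanical. Then, given a GF solution $\theta(t)$, I would apply the multivariable chain rule to get
\begin{equation*}
    \frac{d}{dt}g(\theta(t)) \;=\; Dg(\theta(t))\,\dot\theta(t) \;=\; H(\theta(t))\,\dot\theta(t).
\end{equation*}
Finally, I would substitute the GF equation $\dot\theta(t)=-g(\theta(t))$ to conclude
\begin{equation*}
    \frac{d}{dt}g(\theta(t)) \;=\; -H(\theta(t))\,g(\theta(t)),
\end{equation*}
which is exactly \eqref{eq:hessian_ode}.

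There is no real obstacle here; the only subtlety worth flagging is the identification of the Jacobian of the gradient field with the Hessian, which requires the symmetry-of-second-partials hypothesis built into ``twice differentiable'' (so that $H$ is well-defined and coincides with $Dg$). Since the paper already assumes $E\in C^2$, this is immediate. The statement is essentially a restatement of the chain rule along gradient flow trajectories and serves as a setup for the characterization $Hg=0$ on corridors in the main theorem that follows.
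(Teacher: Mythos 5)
Your proof is correct and is essentially identical to the paper's: both apply the multivariable chain rule to $t\mapsto g(\theta(t))$, identify the Jacobian of $g=\nabla E$ with the Hessian $H$, and substitute $\dot\theta=-g(\theta)$. Your added remark about the $C^2$ hypothesis guaranteeing $Dg=H$ is a reasonable clarification but does not change the argument.
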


The proof follows immediately from the chain rule
$$\frac{d}{dt}g(\theta(t)) = \nabla_\theta g(\theta(t)) \dot \theta(t) = - H(\theta(t))g(\theta(t)).$$

\begin{lemma}
Inside corridors, GF solutions are of the form $\theta(t) = \theta -tg(\theta)$ and the loss gradient is constant on this line: $g(\theta - tg(\theta)) = g(\theta)$ for all $t$ as long as the line remains a solution.
\label{lemma:cost_grad}
\end{lemma}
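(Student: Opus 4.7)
The plan is to derive both claims directly from Definition~\ref{def:corridor}, without invoking Theorem~\ref{thm:corridor} (which has not yet been proved at this point in the text), so the argument relies only on the straight-line-at-constant-speed hypothesis combined with the GF equation.

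First I would fix an arbitrary initial point $\theta$ in the corridor and let $\theta(t)$ denote the unique GF solution with $\theta(0)=\theta$, defined on the maximal interval on which the trajectory remains inside the corridor. By Definition~\ref{def:corridor}, on this interval there must exist a constant velocity vector $v$ such that $\theta(t) = \theta + tv$. Differentiating this parametrization gives $\dot{\theta}(t) = v$, and comparing with the GF equation $\dot{\theta}(t) = -g(\theta(t))$ produces the pointwise identity $v = -g(\theta(t))$, valid for every admissible $t$.

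Evaluating this identity at $t=0$ identifies $v = -g(\theta)$, which immediately yields the desired parametrization $\theta(t) = \theta - t g(\theta)$. For the constancy of the gradient, I would note that since $v$ is constant and equals $-g(\theta(t))$ for all admissible $t$, we have $g(\theta(t)) = -v = g(\theta)$. Substituting the explicit form of $\theta(t)$ gives the stated identity $g(\theta - t g(\theta)) = g(\theta)$.

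I do not expect a real obstacle here: the claim is almost tautological once one unpacks the definition of corridor, because the constant-speed condition forces the velocity of the trajectory, which by the GF equation equals minus the gradient, to be constant along the line. The only subtlety worth flagging is the domain of validity, namely that $t$ must lie in the interval where $\theta(t)$ stays inside the corridor; this is exactly the hedge ``as long as the line remains a solution'' appearing in the lemma statement, and it matches the setup needed later (e.g.\ to justify the Corridor Learning Rate step).
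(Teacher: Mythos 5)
Your proposal is correct and follows essentially the same argument as the paper's own proof: write the trajectory as a line with constant velocity per Definition~\ref{def:corridor}, differentiate, compare with the GF equation to conclude $v=-g(\theta(t))$ is constant, and evaluate at $t=0$. The remark about the domain of validity is a reasonable clarification but not a substantive difference.
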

\begin{proof}
By Def.~\ref{def:corridor},
 solutions to the GF inside a corridor are of the form  $\theta(t) = a + tb$.  Setting $t=0$ implies that $a=\theta(0)$. We also have that inside a corridor $\dot \theta(t) = b$, and thus $b =  - g(\theta(t)) = -g(\theta(0) + tb)$. So $g$ is constant on $\theta(t)$ and we obtain that $b = -g(\theta(0))$ by setting $t=0$.
\end{proof}

{\color{black}
The next proposition shows that the loss decreases linearly on a line of steepest descent within a corridor; moreover, the trajectories followed on the loss surface when the parameter follows a line of steepest descent is itself a line:
}
{\color{black}
\begin{proposition} \label{prop:linear_loss} On a corridor $U\subset \R^n$ the loss decreases linearly under the GF. More precisely, on the line of steepest descent through $\theta$, i.e., $\theta(t) = \theta - tg(\theta)$, we have that ${E(\theta(t)) = E(\theta) - t\| g(\theta)\|^2}$. Moreover, the line of steepest descent lifts to a line through $\theta$ on the loss surface: ${t\mapsto \left(\theta - g(\theta)t,\,  E(\theta) - t\| g(\theta)\|^2\right)}$.
\end{proposition}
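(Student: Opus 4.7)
The plan is to use Lemma~\ref{lemma:cost_grad} (which is already established above) as the main engine: inside a corridor the gradient stays constant along a GF trajectory, so the time derivative of the loss along that trajectory is constant, and integration gives the claimed linear formula.

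More concretely, I would first fix $\theta \in U$ and consider the GF solution $\theta(t) = \theta - t g(\theta)$, which by Lemma~\ref{lemma:cost_grad} indeed lies in the corridor (as long as it remains in $U$) and satisfies $g(\theta(t)) = g(\theta)$. Then I would apply the chain rule to compute
\begin{equation*}
    \frac{d}{dt} E(\theta(t)) = \nabla E(\theta(t)) \cdot \dot{\theta}(t) = g(\theta(t)) \cdot (-g(\theta(t))) = -\|g(\theta(t))\|^2.
\end{equation*}
Substituting the constant value $g(\theta(t)) = g(\theta)$ from Lemma~\ref{lemma:cost_grad}, this derivative equals $-\|g(\theta)\|^2$, a constant in $t$. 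Integrating from $0$ to $t$ and using $\theta(0) = \theta$ immediately yields $E(\theta(t)) = E(\theta) - t\|g(\theta)\|^2$.

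For the lifting statement, I would simply observe that the curve $t\mapsto (\theta(t), E(\theta(t))) = (\theta - t g(\theta),\, E(\theta) - t \|g(\theta)\|^2)$ has both components affine in $t$, so it is parametrized as a straight line through $(\theta, E(\theta))$ in $\mathbb{R}^{n+1}$ with direction vector $(-g(\theta), -\|g(\theta)\|^2)$. No obstacle is expected here: the entire proof is a one-line chain-rule calculation plus an appeal to Lemma~\ref{lemma:cost_grad}, and the main ``work'' has already been done in proving that lemma. If anything, the only subtlety to flag is that the formula is valid only for $t$ in the interval where $\theta(t)$ remains in the corridor $U$, which matches the qualifier already present in Lemma~\ref{lemma:cost_grad}.
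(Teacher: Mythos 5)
Your proposal is correct and follows essentially the same route as the paper's proof: a chain-rule computation of $\frac{d}{dt}E(\theta(t)) = -\|g(\theta(t))\|^2$, substitution of the constant gradient from Lemma~\ref{lemma:cost_grad}, integration, and the observation that both components of the lifted curve are affine in $t$. Your remark about validity only while $\theta(t)$ remains in $U$ is a reasonable extra precision.
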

\begin{proof}
By the chain rule we have that the loss under the GF satisfies the following differential equation $\frac{d}{dt} E(\theta(t)) = \nabla_\theta E(\theta) \dot \theta(t) = - \|g(\theta(t))\|^2$. Since inside a corridor the gradient is constant along the solution of the GF and equal to $g(\theta)$, we can thus solve this equation and we obtain that $E(\theta(t)) = E(\theta) - t \|g(\theta)\|^2$. The last part of the proposition then follows by the definition of the loss surface as the set of points $\{(\theta,\, E(\theta)):\: \theta\in \R^n\}$. 
\end{proof}
}
We now give an analytical characterisation of corridors in terms of the Hessian. 
\begin{theorem}\label{thm:corridor}
Let $E:\R^n\rightarrow \R$ be a twice differentiable loss function and $U\subset \R^n$ be a domain of the parameter space. Then $U$ is a corridor if and only if $H(\theta)g(\theta) = 0$ for all $\theta \in U$.
\end{theorem}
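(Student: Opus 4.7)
The plan is to prove both implications by combining Lemma~\ref{lemma:hessian_ode} (which translates the condition $Hg = 0$ into constancy of $g$ along GF solutions) with Lemma~\ref{lemma:cost_grad} (which tells us exactly what GF solutions look like inside corridors). The main workhorse is the ODE $\tfrac{d}{dt} g(\theta(t)) = -H(\theta(t))g(\theta(t))$, which converts the geometric ``straight line at constant speed'' condition into a pointwise linear-algebraic condition on $Hg$.

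For the forward direction ($\Rightarrow$), I assume $U$ is a corridor and pick any $\theta\in U$. By Lemma~\ref{lemma:cost_grad}, the GF solution through $\theta$ has the explicit form $\theta(t) = \theta - tg(\theta)$ and $g$ is constant on this line: $g(\theta(t)) = g(\theta)$. Differentiating in $t$ yields $\tfrac{d}{dt}g(\theta(t)) = 0$, while Lemma~\ref{lemma:hessian_ode} independently evaluates this derivative as $-H(\theta(t))g(\theta(t))$. Equating the two and setting $t = 0$ gives $H(\theta)g(\theta) = 0$. Since $\theta\in U$ was arbitrary, this holds throughout $U$.

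For the backward direction ($\Leftarrow$), I assume $Hg \equiv 0$ on $U$ and let $\theta(t)$ be the GF solution starting at any $\theta_0\in U$. On any time interval where $\theta(t)$ remains in $U$, Lemma~\ref{lemma:hessian_ode} gives $\tfrac{d}{dt}g(\theta(t)) = -H(\theta(t))g(\theta(t)) = 0$, so $g(\theta(t)) = g(\theta_0)$ is constant along the trajectory. The GF equation then becomes $\dot\theta(t) = -g(\theta_0)$, a constant vector field, and integrating gives $\theta(t) = \theta_0 - tg(\theta_0)$. This is exactly a straight line traveled at constant speed, so $U$ satisfies Definition~\ref{def:corridor}.

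The argument is essentially symmetric and short once the two preparatory lemmas are in hand; the only technical subtlety is that in the backward direction we need the solution to remain in $U$ long enough for the ODE argument to apply, which is automatic because $U$ is a domain (open) and $\theta\mapsto \theta_0 - tg(\theta_0)$ is continuous in $t$, so there is always a nontrivial interval on which the trajectory stays inside $U$. I do not expect any serious obstacle beyond this bookkeeping.
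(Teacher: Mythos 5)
Your proof is correct and follows essentially the same route as the paper's: both directions hinge on combining Lemma~\ref{lemma:cost_grad} with Lemma~\ref{lemma:hessian_ode} to equate $\tfrac{d}{dt}g(\theta(t))$ with $-Hg$, and the backward direction integrates the resulting constant vector field. Your added remark about the trajectory remaining in the open set $U$ is a harmless bit of bookkeeping that the paper leaves implicit.
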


\begin{proof}
In a corridor $U$, $g$ is constant on the lines $\theta(t)$ of steepest descent by Lemma \ref{lemma:cost_grad}, and thus $\frac {d}{dt}g(\theta(t)) =  - H(\theta(t)) g(\theta(t)) = 0$ by  Lemma~\ref{lemma:hessian_ode}. Since this holds for all $\theta(0) \in U$, this implies $H(\theta) g(\theta) = 0$ on $U$. Conversely, if $H(\theta) g(\theta) = 0, \forall \theta \in U$, we have that $$\frac{d}{dt}g(\theta(t)) = - H(\theta(t)) g(\theta(t)) = 0,$$ which implies that $g(\theta(t)) = g(\theta(0))$ and the GF ODE on corridors becomes $\dot \theta = -g(\theta(0))$. This last equation is easy to solve: $$\theta(t) = \theta(0) + \int_0^t - g(\theta(t')) dt' = \theta(0) - t g(\theta_0),$$ entailing that the GF solution is a straight line and we are thus in a corridor.
\end{proof}

\section{Corridors are important for optimization} \label{section:corridors_and_optimization}
We now show that corridors are the only regions where GD steps on the GF. In
Appendix~\ref{app:beyond_gd}, we show that inside corridors Runge-Kutta4~\cite{hairer2006geometric} and momentum~\cite{polyak1964some} also follow the GF trajectory.
\begin{theorem}\label{thm:iterates_and_corridors}
Let $E:\R^n\rightarrow \R$ be a twice differentiable loss function and $U\subset \R^n$ be a domain of the parameter space. Then $U$ is a corridor if and only if the GD iterates $\theta_n$ step on the solution $\theta(t)$ of the GF: $\theta(nh) = \theta_n$. 
\end{theorem}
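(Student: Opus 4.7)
The plan is to establish both directions of the biconditional, interpreting the equality $\theta(nh) = \theta_n$ as holding for every starting point $\theta_0 \in U$ and every sufficiently small step size $h>0$ (so long as the iterates remain in $U$). With this reading, both directions follow from the lemmas already proved, without fresh machinery.

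For the forward direction, I would assume $U$ is a corridor and invoke Lemma~\ref{lemma:cost_grad}, which tells me that the GF solution through any $\theta_0 \in U$ is the straight line $\theta(t) = \theta_0 - t g(\theta_0)$ and, crucially, that $g$ is constant along this line. The proof is then an easy induction on $n$: the case $n=0$ is trivial, and for the step, if $\theta_n = \theta(nh)$ then
$$\theta_{n+1} = \theta_n - h g(\theta_n) = \theta(nh) - h g(\theta_0) = \theta_0 - (n+1)h\, g(\theta_0) = \theta((n+1)h),$$
where the second equality uses both the inductive hypothesis and the constancy of $g$ along the line of steepest descent.

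For the converse, I would specialize the hypothesis to $n=1$, obtaining $\theta(h) = \theta_0 - h g(\theta_0)$ for all sufficiently small $h$. Since this is a straight line parameterized at constant speed $\|g(\theta_0)\|$, the GF trajectory through $\theta_0$ is itself a constant-speed line on an interval $[0,h_0]$; as $\theta_0 \in U$ was arbitrary, every GF trajectory in $U$ is such a line and $U$ satisfies Definition~\ref{def:corridor}. A tidier alternative, which reuses the earlier theorem, is to differentiate $\theta(h) = \theta_0 - h g(\theta_0)$ twice in $h$ at $h=0$: by Lemma~\ref{lemma:hessian_ode} (and one more chain rule) the left side gives $\ddot{\theta}(0) = H(\theta_0)g(\theta_0)$, while the right side vanishes, yielding $Hg=0$ on $U$ and hence a corridor by Theorem~\ref{thm:corridor}.

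The main obstacle is interpretive rather than computational: fixing the quantifiers on $h$ and $n$. A statement quantified over only a single fixed pair $(n,h)$ would be far too weak for the converse, since a GF trajectory could wander off the Euler line between sample times and still happen to pass through the discrete iterate at $t=nh$; the natural and intended reading, used above, is that equality holds across a nontrivial range of step sizes. Once that is granted, both implications follow essentially for free from Lemmas~\ref{lemma:hessian_ode}--\ref{lemma:cost_grad} and mirror the logic of the proof of Theorem~\ref{thm:corridor}.
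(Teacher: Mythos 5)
Your proof is correct. The forward direction is identical to the paper's: induction on $n$ using the constancy of $g$ along the line of steepest descent from Lemma~\ref{lemma:cost_grad}. For the converse, your ``tidier alternative'' (differentiating $\theta(h)=\theta_0-hg(\theta_0)$ twice at $h=0$ to get $H(\theta_0)g(\theta_0)=0$ and invoking Theorem~\ref{thm:corridor}) is exactly the paper's argument, which phrases the same computation as a Taylor expansion of the GF solution to order $h^2$. Your primary route for the converse is in fact slightly more direct than the paper's: once the hypothesis is read as holding for a range of step sizes $h$, the identity $\theta(h)=\theta_0-hg(\theta_0)$ already exhibits the GF solution as a constant-speed line, so $U$ is a corridor by Definition~\ref{def:corridor} without passing through the $Hg=0$ characterization at all. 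Your remark about the quantifiers is also on point --- the paper's converse likewise assumes the equality for all initial conditions in $U$ and all learning rates $h$, and the statement would indeed be too weak if quantified over a single fixed $(n,h)$.
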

\begin{proof}
In a corridor, we prove by induction that $\theta_n = \theta(nh)$, where $\theta(t) = \theta_0 - t g(\theta_0)$ is the line of steepest descent through $\theta_0$ from Lemma \ref{lemma:cost_grad}. We obtain the base case by definition of GD: $\theta_1 = \theta_0 - h g(\theta_0) = \theta(h)$. Now suppose by induction that we have $\theta_n = \theta(nh)$. This implies that
$
\theta_{n+1} = \theta(nh) - hg(\theta(nh))
= \theta_0 - nh g(\theta_0) - h g(\theta_0) = \theta((n+1)h)
$, where we used that  $g(\theta_0 - nhg(\theta_0)) = g(\theta_0)$ according to Lemma \ref{lemma:cost_grad}.

Conversely, we assume that $\theta_n = \theta(nh)$ for all initial conditions $\theta \in U$ and all learning rates $h$ as long as we remain on $U$. By setting $n=1$, we obtain that the solution of the GF through $\theta$ becomes $\theta(h) = \theta_1 = \theta - hg(\theta)$. The Taylor expansion of the GF solution with initial condition $\theta$ is $$\theta(h) = 
\theta - h \theta' + \frac{h^2}{2} \theta'' + \mathcal{O}(h^3) = \theta - h g(\theta) - \frac{h^2}{2} \frac{d}{dt}  g(\theta) + \mathcal{O}(h^3)= \theta - h g(\theta) + \frac{h^2}{2} H(\theta) g (\theta)+ \mathcal{O}(h^3),$$ where the first equality comes from the definition of the GF, and the second one from Lemma~\ref{lemma:hessian_ode}.
From here, we have that $H(\theta) g(\theta) = 0$, and thus we are inside a corridor by Theorem \ref{thm:corridor}.
\end{proof}

Theorem \ref{thm:iterates_and_corridors} shows that GD and the GF follow the same trajectory if and only if the landscape is a corridor. This entails that as long as a corridor is not exited, GD exactly minimizes the loss function $E$, {\color{black} which decreases linearly (Proposition \ref{prop:linear_loss})}, and implicit regularization effects found due to the discrepancy between GD and the GF such as Implicit Gradient Regularization ~\citep{barrett2021implicit} have no effect.

While inside corridors GD follows the GF exactly, this is not the case for Stochastic Gradient Descent (SGD),  where a different data batch $B$ is used at each iteration. The batch gradient $g_B$ can be rewritten as  $g_B = g + \epsilon_B$ where $g$ denotes the full batch gradient, and the residual $\epsilon_B$ is due to the noise induced by the sampling of batch $B$. Despite being inside a corridor, SGD will not follow the GD trajectory exactly, since $\theta(h) = \theta - h g(\theta) \ne \theta - hg_B(\theta) = \theta - h g(\theta) -h \epsilon_B $.
Thus, as the batch size decreases and the learning rate increases, implicit regularization will occur for SGD inside corridors. In Appendix~\ref{section:sgd}, we quantify this further based on results on implicit regularization for SGD from ~\citet{smith2021on}: we show that while the gradient norm implicit regularizer vanishes inside corridors, the per batch consistency implicit regularizer remains.

We show that  Runge-Kutta4 as well as GD with momentum also slide on the lines of steepest descent (i.e. the solutions of the GF) in a corridor in the Appendix. For  Runge-Kutta4, the discrete steps exactly follow GF solutions inside a corridor (Appendix \ref{section:rk4}). For momentum, we show in Appendix~\ref{section:momentum} that if the momentum iterates enter a corridor with velocity zero or one aligned with the initial corridor gradient, momentum will follow the lines of steepest descent, but at a different speed than the GF. For momentum with zero initial velocity, we recover the ODE describing momentum found by~\citet{ghosh2023implicit} and~\citet{cattaneo2023implicit}, but the implicit regularization term vanishes, like for GD. Outside of these conditions, however, momentum may drift from the GF, creating theoretical conditions for implicit regularization.
 These results suggests the importance of understanding corridors for optimization; we turn our attention to this in the next section.

\section{Corridor learning rate adaptation}

We devise a simple learning rate adaptation for GD based on the idea of  corridors.
{\color{black}
We assume that a point $\theta$ is in a corridor of the loss surface; as shown in Theorem \ref{thm:iterates_and_corridors}, the gradient iterates starting at $\theta$ coincide with the line of steepest descent $\theta(h) = \theta - h g(\theta)$ on the corridor. Moreover, by Proposition \ref{prop:linear_loss}, the loss on this steepest descent line decreases linearly as $E(\theta(h)) = E(\theta) - h\|g(\theta)\|^2$.
}
Assuming the minimal value of the loss function is $E(\theta^*)$ is 0, which holds for common losses used in deep learning with models with sufficient capacity, {\color{black} } and solving for $h$ in order to obtain the step size that allows us to reach that optimal value in \textit{one step} whilst in a corridor, we obtain:
\begin{align}
    E(\theta)  - h \|g(\theta)\|^2 = 0,
\end{align}
which if solved for $h$ gives the {\bf Corridor Learning Rate (CLR)} and corresponding gradient update:
\begin{equation}
    h(\theta) = \frac{E(\theta)}{\|g (\theta)\|^2},
    \quad\quad
    \theta' = \theta - \frac{E(\theta)}{\|g(\theta)\|^2}  g(\theta),
\end{equation}

{\color{black}
\subsection{Connection to Polyak step size}

{\color{black}
}
The CLR is connected to the Polyak step size~\citep{polyak1987introduction} for GD, where the learning rate is set to $h = \frac{E(\theta) - E(\theta^\textbf{}*)}{\|g (\theta)\|^2}$ and $E(\theta^*)$ is the loss function at the optimum. Thus, if $E(\theta^*) =0$ (as is the case for over-parameterized neural networks), the CLR recovers the Polyak step size. Polyak~\cite{polyak1987introduction} argues that this step size is optimal when $E$ is non-smooth and convex. This optimality result has been extended to  various additional convex settings by \citet{hazan2019revisiting} in particular to  $\beta$-smooth and strongly convex functions. They further provide an algorithm which does not require knowledge of $E(\theta^*)$, but only a lower bound on $E(\theta^*)$; assuming the lower bound is $0$, the learning rate used is $h = \frac{E(\theta)}{2 \|g (\theta)\|^2}$, differing from the CLT by a multiplier of $1/2$. 

Beyond the convex case, \citet{loizou2021stochastic} adapted the Polyak step-size to the stochastic setting and analysed convergence properties in a number of smooth cases, and in particular: 1) The Polyak-Lojasiewicz (PL) condition, which in the over parameterized regime assumes that the adaptive step-size $E(\theta) / \|\nabla E(\theta)\|^2$ is bounded, and 2) a condition on the gradient $\mathbb E(\|\nabla E_B(\theta)\|^2) \leq \rho \|\nabla E(\theta)\|^2 + \delta$ for some $\rho, \delta \geq 0$ and where $E_B$ is the batch loss; the latter condition has been previously studied by \citet{bottou2018optimization}. \citet{loizou2021stochastic} find theoretical convergence guarantees in both cases: namely to an exact  minimizer in case 1) and  to a neighborhood of the global minima in case 2), provided some modifications such as clipping the step-size for too large values and multiplying it by a constant factor as an additional hyper-parameter.   \citet{orvieto2022dynamics} considered further variations guarantying convergence to an exact minimizer in the same setting.

\textbf{The CLT and Polyak assumptions:}  The CLT is derived by assuming the loss landscape is a corridor, and then solving the learning rate that reaches the optimum loss value of $0$ in \textit{one step}. In contrast, the initial formulation of the Polyak learning rate is derived assuming a convex loss function. We note that the set of convex functions and corridors have non null differences; Example~\ref{ex:quadratic} has shown that quadratic functions do not allow corridors, while Example~\ref{ex:inverted_cone} has shown a concave function that is a corridor. 
The more recent use of the Polyak learning rate, such as that of ~\citet{loizou2021stochastic} in the non-convex case, employs other assumptions, such as the PL condition. The PL condition is a broad condition, which holds for large parts of neural network landscapes; this can be further observed by the fact that the PL condition is equivalent to assuming that the Polyak step-size is finite, and the Polyak step-size has been used successfully to train neural networks. While corridors are not likely to cover entire neural network landscapes due edge of stability results \cite{cohen2021gradient,ma2022quadratic}, it has been observed that GF follows GD early in neural network training  \cite{cohen2021gradient,rosca2023on}.
Furthermore, the understanding of corridors in neural networks is relevant from an implicit regularisation point of view, as implicit regularisation derived from the discretization error of gradient descent will not be present in corridors. 
}
\subsection{Empirical evidence from deep learning}

While the landscape of neural networks is not formed by corridors only (since then  phenomena such as the edge of stability \cite{cohen2021gradient} would not occur), there has been empirical evidence that regions where GD coincides with the GF occur in network training~\citep{cohen2021gradient,rosca2021discretization}. {\color{black}
Namely, for convergent learning rates, one observes that GF follows GD before the edge of stability phenomenon happens \cite{cohen2021gradient,rosca2023on},
 making corridors a possibly useful notion to understand the geometry of these regions (see Theorem \ref{thm:iterates_and_corridors}).
As discussed in the previous section, the CLR coincides the Polyak step-size, which has already been shown to be a fast and effective scheme in deep learning.
In particular, \citet{loizou2021stochastic} observe fast convergence for deep matrix factorization, multiple datasets from \citet{chang2011libsvm}, as well as for MNIST~\citep{lecun1989backpropagation}, CIFAR-10~\citep{cifar10}, CIFAR-100~\citep{cifar10}. Their results use the ResNet-34 \cite{resnets} and DenseNet-121 \cite{huang2017densenet} architectures. 
We add to this body of evidence by testing the CLR for the ResNet-18~\citep{resnets} and VGG~\citep{simonyan2014very} architectures on CIFAR-10, and a ResNet-50 architecture on ImageNet \cite{deng2009imagenet}. In all cases we tried, we  observed that the CLR provides stable training and quick convergence. Figure~\ref{fig:resnet_18} shows that the CLR reaches 0 training loss quicker than SGD over a learning rate sweep  on CIFAR-10, but that a larger learning rate reaches a better test set accuracy. Figure~\ref{fig:imagenet} shows ImageNet results across a series of batch sizes. Figure~\ref{fig:adaptive_lrs} shows the value of the CLR in training for both datasets. We note that with CLR we observe an (automated) initial learning rate warm-up followed by a decay, coinciding with the common practice of learning rate manually devised schedules~\citep{cosine_lr_decay,he2019bag}. We present additional experimental results in the Appendix \ref{section:additional_experiments}. 
}

\begin{figure}[ht]
\centering
\includegraphics[width=0.45\textwidth]{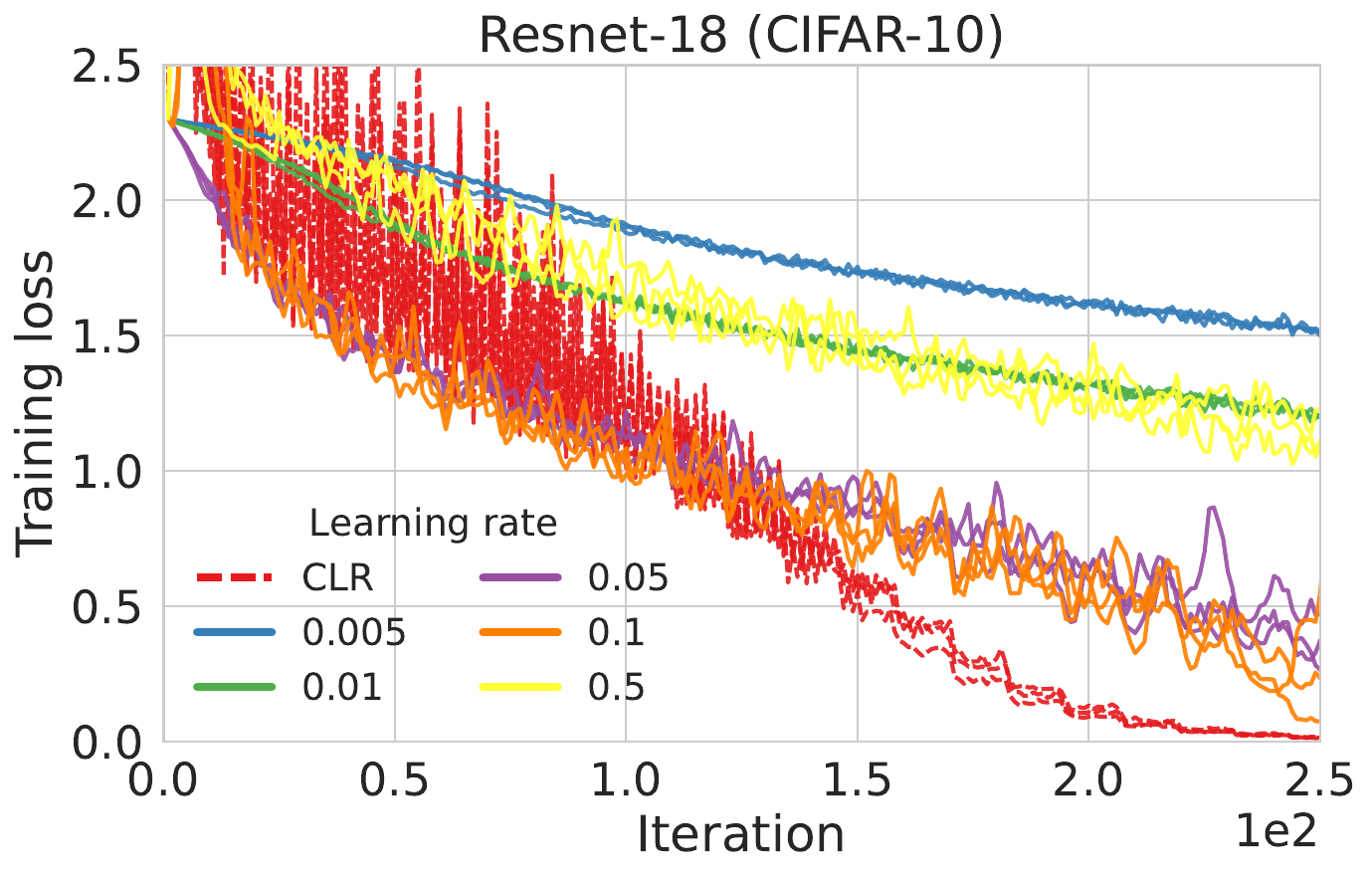}
\includegraphics[width=0.45\textwidth]{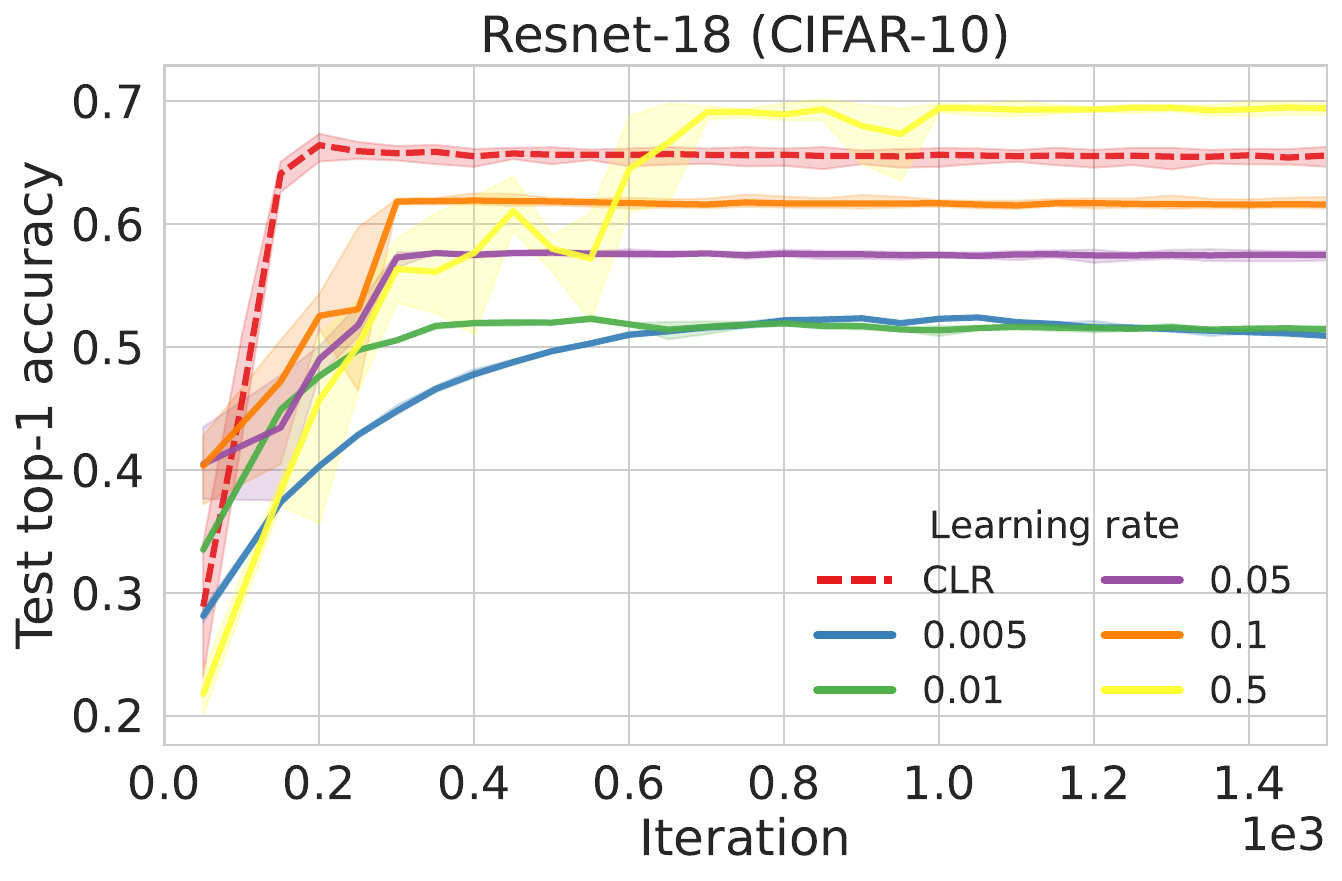}
\caption{The adaptive CLR converges for a ResNet-18 trained on CIFAR-10, and it does so quicker than SGD with a fixed learning rate obtained from a sweep. Batch size 4096. Results across a wide range of batch sizes can be found in Figure~\ref{fig:cifar_10_resnet_18_batch_size_sweep}. Results obtained using 3 seeds.}
\label{fig:resnet_18}
\end{figure}

\begin{figure}[ht]
\centering
\includegraphics[width=0.45\textwidth]{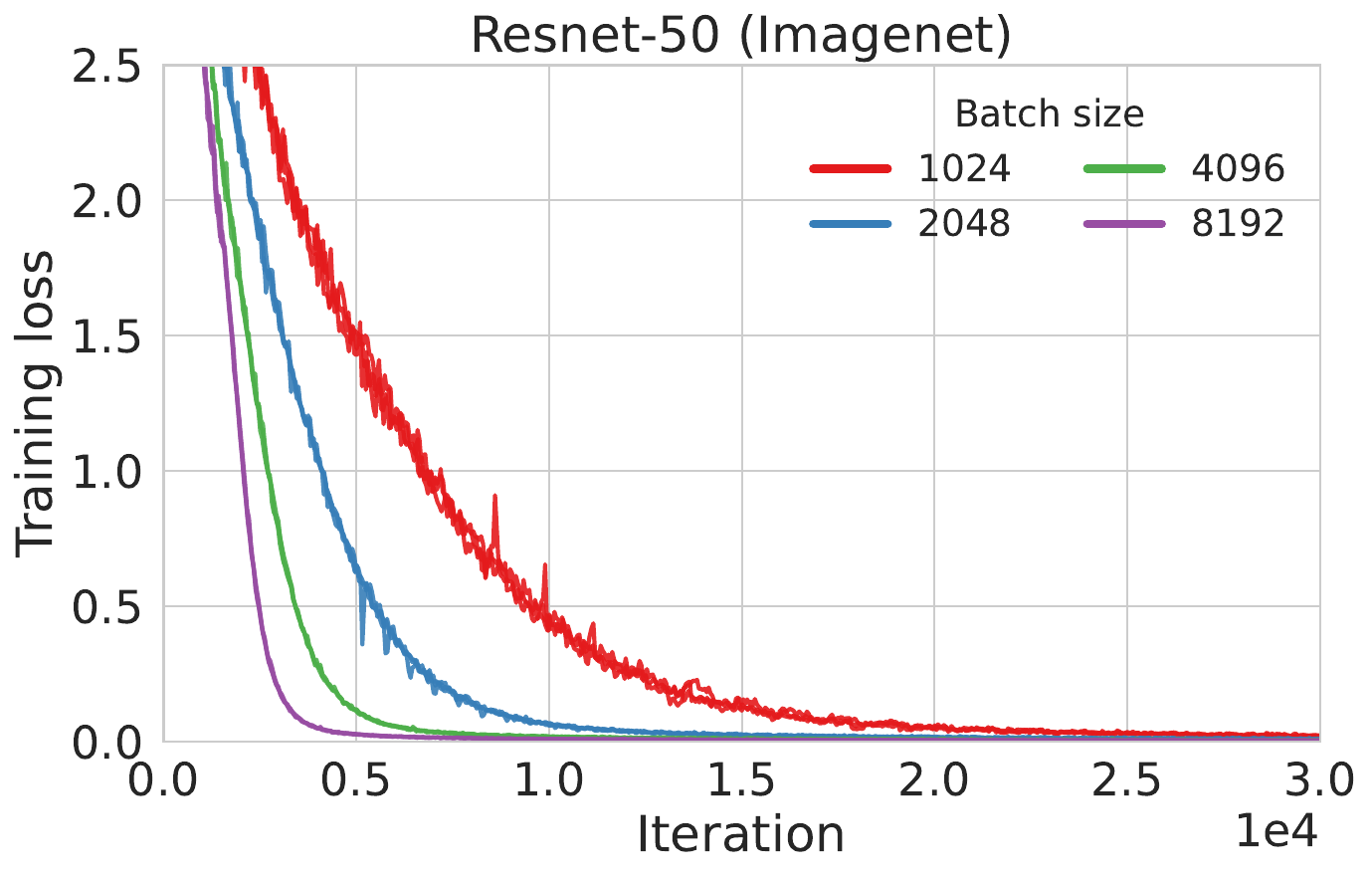}
\includegraphics[width=0.45\textwidth]{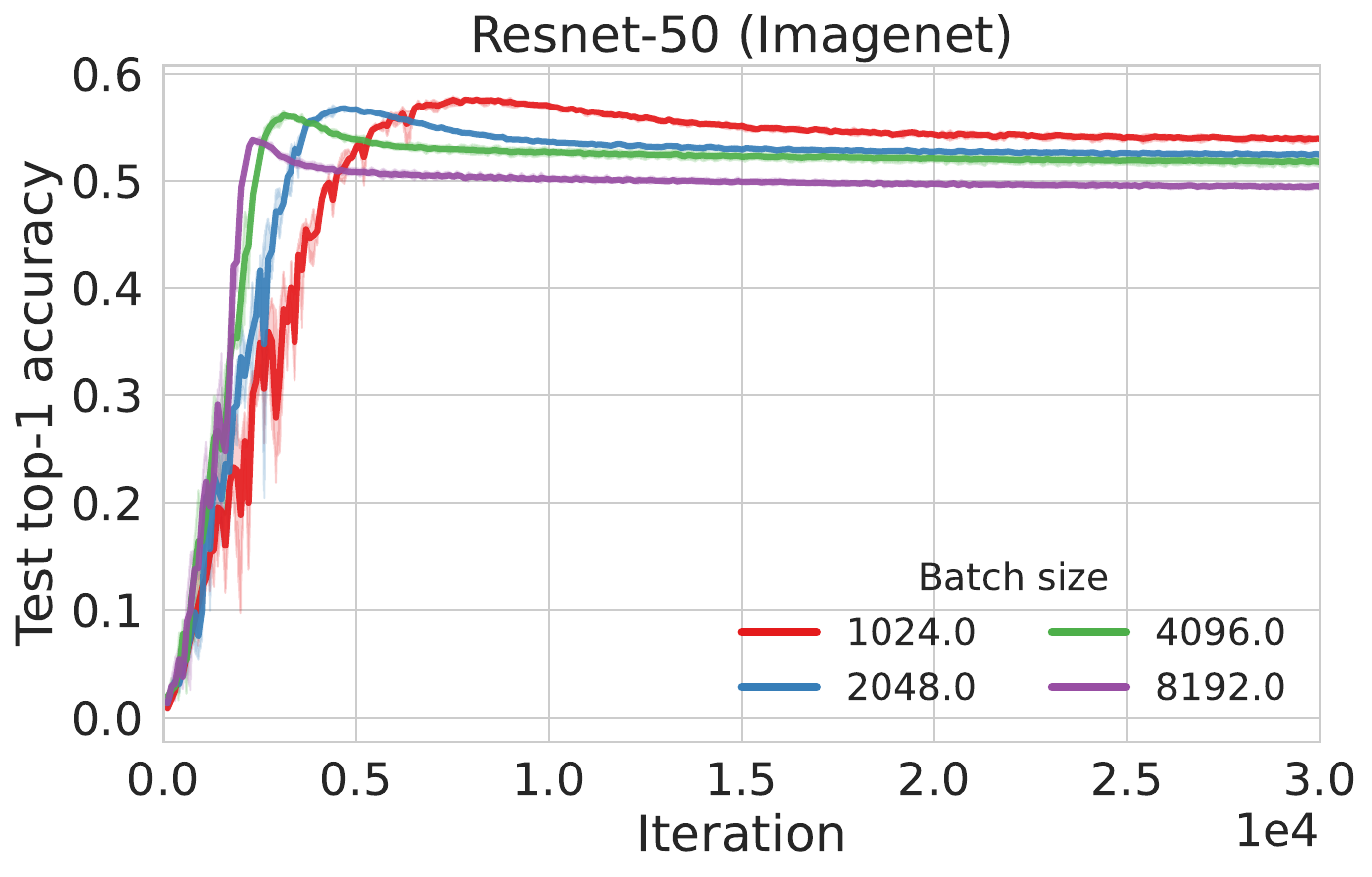}
\caption{The CLR converges for a ResNet-50 trained on Imagenet across batch sizes. We compare with vanilla SGD in Figure~\ref{fig:imagenet_comp}: the CLR converges quicker than the fixed learning rates. Results obtained using 3 seeds.}
\label{fig:imagenet}
\end{figure}

\begin{figure}[h!]
\centering
\includegraphics[width=0.45\textwidth]{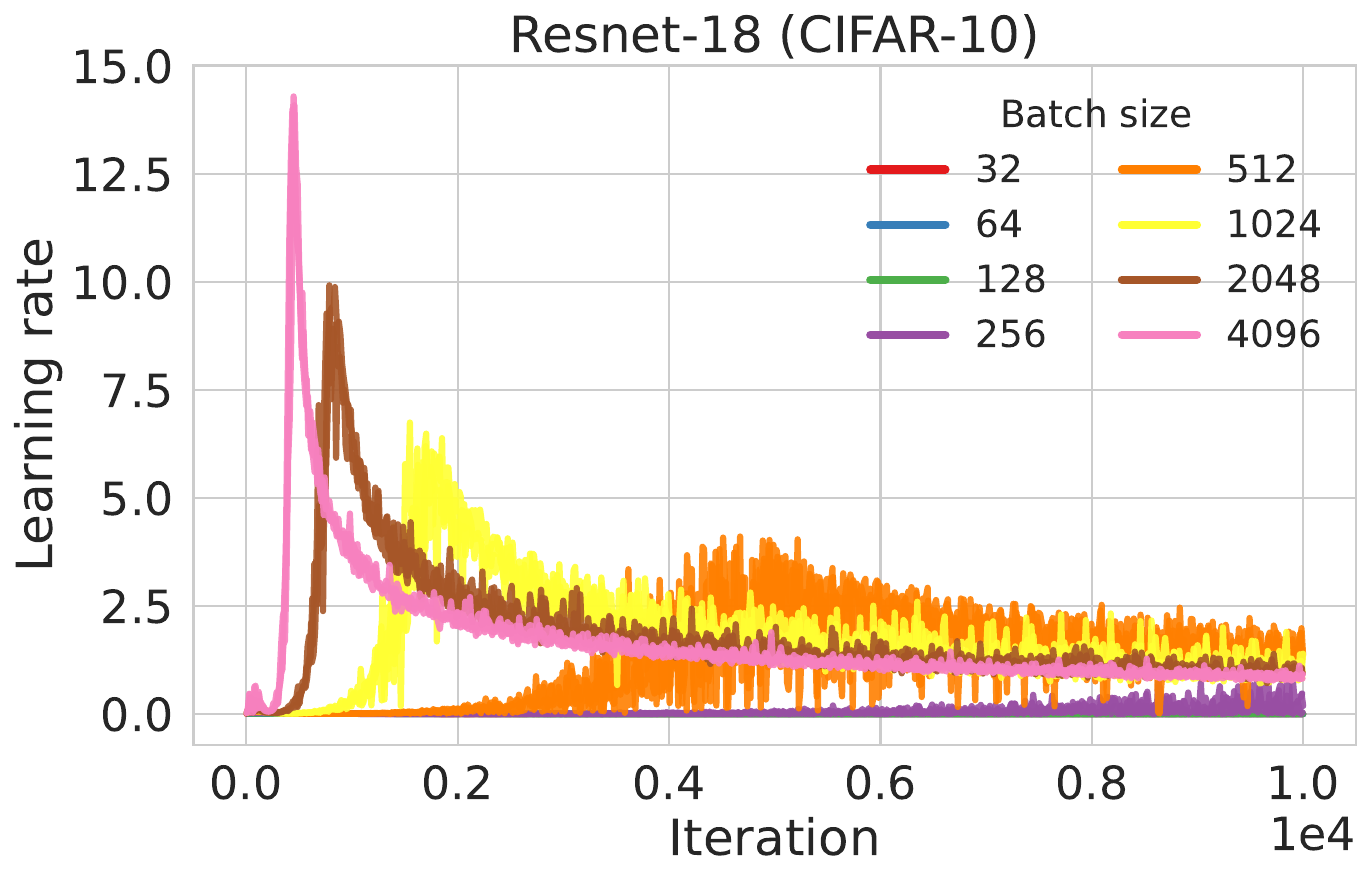}
\includegraphics[width=0.45\textwidth]{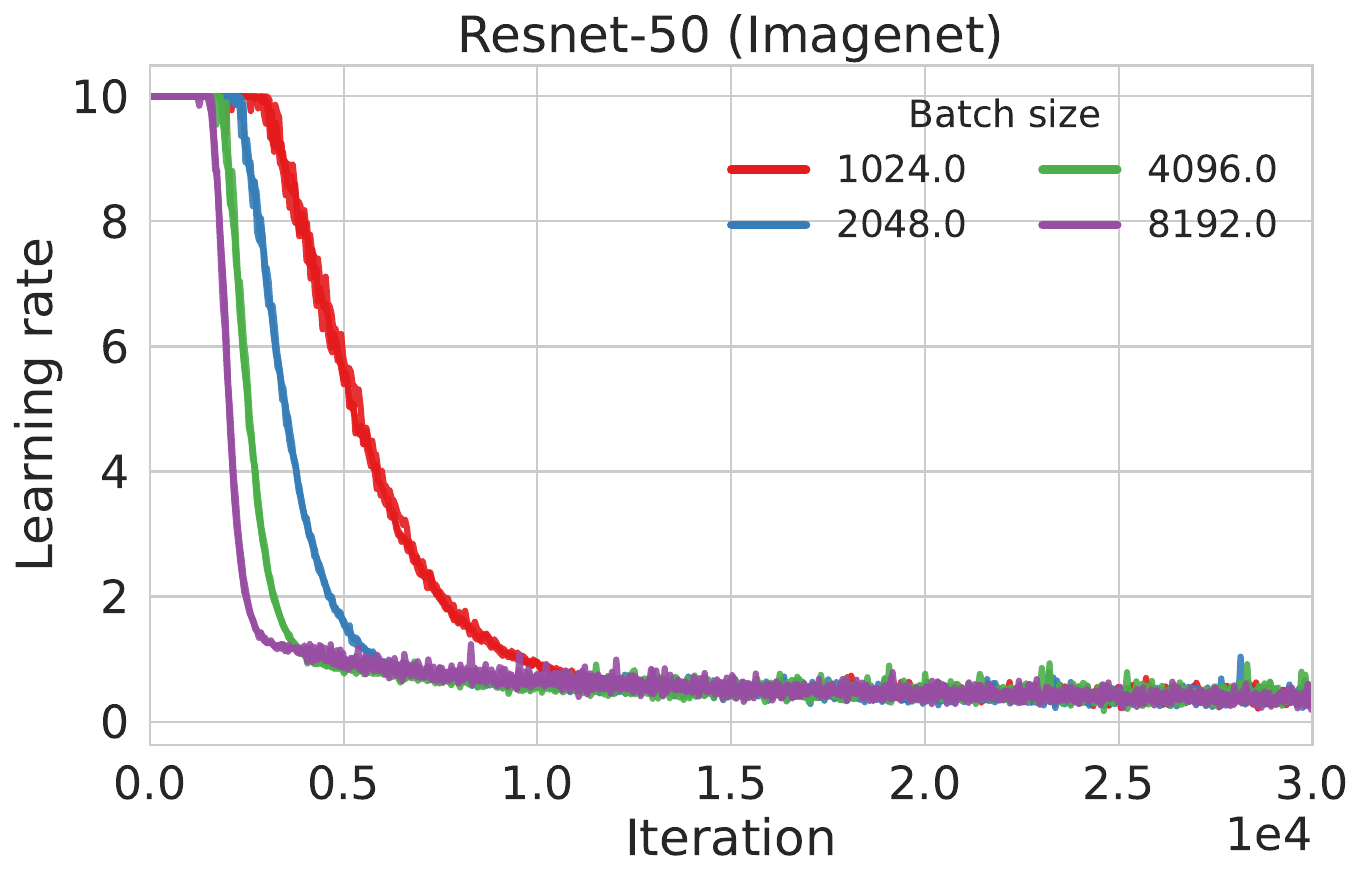}
\caption{CLR values adapt to training. Note: for ImageNet we cap the maximum value of the learning rate to 10. Results obtained using 3 seeds.}
\label{fig:adaptive_lrs}
\end{figure}

\newpage
\section{Conclusion}

We defined the notion of \textit{corridors} in loss landscapes as regions of the loss surface where the solutions of the GF become lines travelled at constant speed. We then found useful properties of corridors, by showing that inside corridors GD follows the GF exactly, that $H(\theta)g(\theta)$ vanishes, and that corridors are the only regions where this can happen. 
{\color{black}
We also showed that inside corridors the loss decreases linearly, allowing us to 
 construct an adaptive learning rate for GD, the corridor learning rate (CLT). We found that the CLT  coincides with the Polyak step-size  \cite{polyak1987introduction}; the Polyak step size had been observed to converge in the context of deep learning in prior work, and here we contributed to these results using additional architectures and datasets.
 }
The focus of this work is to advertise the notion of corridors, and foster new research along these geometric lines (pun intended!). Our focus here is not to devise an adaptation learning scheme competitive with algorithms such as Adam~\citep{kingma2014adam}, or other adaptive learning rates for SGD ~\citep{rosca2023on,ivgi2023dog,cutkosky2023mechanic} but instead to showcase the usefulness of studying corridors. In particular,
we hope that future work will hone into the theoretical and empirical study of \textit{when} corridors occur in neural network training.
Since inside corridors 
implicit regularization effects induced by the discretization error of GD do not occur, finding the scope of corridors in neural landscapes can further unlock the understanding of optimization induced implicit regularization in deep learning. 

{\color{black}
\section*{Acknowledgements}
We would like to thank David Barrett, Shakir Mohamed, and Michael Munn for helpful discussion and feedback as well as Patrick Cole for their support.
}
\newpage

\bibliographystyle{plainnat}
\bibliography{opt2023_corridors}

\newpage
\clearpage
\appendix

\section{Beyond gradient descent}
\label{app:beyond_gd}
\subsection{Momentum follows a sped up GF in corridors}\label{section:momentum}

In this section, we investigate the behaviour of momentum inside corridors. As in the main text, we will use that inside a corridor, we have $g(\theta - t g(\theta)) = g(\theta), \forall t \ge 0$ (Lemma~\ref{lemma:cost_grad}).

We now use this property to investigate momentum updates in corridors. Momentum updates with decay rates $\beta$ and learning rate $h$ are defined as 
\begin{align}
    v_0 &= 0 \\
    v_t &= \beta v_{t-1} - h  g(\theta_{t-1}) \\
    \theta_t &= \theta_{t-1} + v_t
\end{align}

\begin{lemma} Momentum updates in a corridor (the model is initialised in a corridor and remains in a corridor as updates occur) can be written as 
\begin{align}
  v_t &= - h \frac{(1 - \beta^{t})}{1-\beta} g(\theta_0) \\
  \theta_t &= \theta_0 - \frac{h}{1-\beta} \sum_{i=1}^{t} (1 - \beta^{i}) g(\theta_0) \label{eq:mom}
\end{align}
\end{lemma}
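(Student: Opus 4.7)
The plan is to proceed by induction on $t$, using as the crucial ingredient Lemma \ref{lemma:cost_grad}: inside a corridor, the gradient $g$ is constant along the line of steepest descent through $\theta_0$, that is $g(\theta_0 - s\, g(\theta_0)) = g(\theta_0)$ for any $s \geq 0$ for which $\theta_0 - s\, g(\theta_0)$ remains in the corridor. The key observation that unlocks the induction is that the closed-form expression claimed for $\theta_t$ is of the form $\theta_0 - c_t\, g(\theta_0)$ with $c_t = \frac{h}{1-\beta}\sum_{i=1}^{t}(1-\beta^i) \geq 0$, so each momentum iterate lies exactly on the line of steepest descent through $\theta_0$. Consequently, by Lemma \ref{lemma:cost_grad}, the gradient evaluated at $\theta_t$ equals $g(\theta_0)$, which is precisely what allows the momentum recursion to unfold into the geometric partial sum claimed.

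The proof will proceed in three short steps. First, I would verify the base case $t=1$: the definition of momentum gives $v_1 = \beta v_0 - h g(\theta_0) = -h g(\theta_0)$ and $\theta_1 = \theta_0 - h g(\theta_0)$, which match the claimed formulas when $t=1$ (since $\tfrac{1-\beta}{1-\beta}=1$). Second, assuming the formulas hold up to $t$, I would observe that $\theta_t = \theta_0 - c_t g(\theta_0)$ with $c_t \geq 0$, so the corridor assumption combined with Lemma \ref{lemma:cost_grad} gives $g(\theta_t) = g(\theta_0)$. Third, I would substitute this into the momentum recursion to compute
\begin{equation*}
v_{t+1} = \beta v_t - h g(\theta_t) = -h g(\theta_0)\left[\frac{\beta(1-\beta^t)}{1-\beta} + 1\right] = -h \frac{1-\beta^{t+1}}{1-\beta}\, g(\theta_0),
\end{equation*}
and then $\theta_{t+1} = \theta_t + v_{t+1}$ which collapses to the claimed sum extended to $i=t+1$.

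The main subtlety is not the algebra, which is a routine telescoping/geometric sum, but rather the conceptual step of checking that the corridor hypothesis justifies using Lemma \ref{lemma:cost_grad} at every iterate. This requires noting that the momentum iterates do not merely stay in the corridor in some abstract sense, but actually lie along the single line of steepest descent through $\theta_0$ (with a monotonically increasing scalar coefficient). Once this is pointed out, the induction goes through cleanly; the hypothesis of the lemma (``the model is initialised in a corridor and remains in a corridor'') is exactly what is needed to invoke Lemma \ref{lemma:cost_grad} at each step, and no additional analytic work is required.
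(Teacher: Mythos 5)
Your proposal is correct and follows essentially the same route as the paper's proof: induction on $t$, with the base case from the momentum definition and the induction step using Lemma~\ref{lemma:cost_grad} to replace $g(\theta_t)$ by $g(\theta_0)$ before collapsing the geometric recursion. Your explicit remark that each iterate has the form $\theta_0 - c_t\, g(\theta_0)$ with $c_t \ge 0$, so that it lies on the line of steepest descent through $\theta_0$ and Lemma~\ref{lemma:cost_grad} genuinely applies, is a slightly more careful justification than the paper's terse citation of the induction hypothesis, but it is the same argument.
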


\begin{proof}
The proof is by induction. Base case:
\begin{align}
    v_1 &= \beta v_0 - h g(\theta_0) =  - h g(\theta_0) \\
    \theta_1 &= \theta_0 - h g(\theta_0)
\end{align}

Induction step:
\begin{eqnarray*}
    v_t &=& \beta v_{t-1} - h g(\theta_{t-1})  \\
       &=& \beta \left(- h \frac{(1 - \beta^{t-1})}{1-\beta} g(\theta_0)\right) - h g(\theta_{t-1}) \\
       &=& \beta \left(- h \frac{(1 - \beta^{t-1})}{1-\beta} g(\theta_0)\right) - h g(\theta_{0}) \quad\textrm{(Induct. Hypo. \&}\, \textrm{ Lemma~\ref{lemma:cost_grad})} \\
       &=& - h \frac{1}{1-\beta} \left(\beta (1 - \beta^{t-1}) + 1 -\beta \right) g(\theta_{0}) \\
       &=& - h \frac{(1 - \beta^{t})}{1-\beta} g(\theta_0)
       \label{eq:v_t_eq}
\end{eqnarray*}
From here:
\begin{align}
   \theta_t &= \theta_{t-1} + \sum_{i=1}^{t} v_i =  \theta_0 - \frac{h}{1-\beta} \sum_{i=1}^{t} (1 - \beta^{i}) g(\theta_0)
\end{align}
finishing the proof.
\end{proof}

\begin{remark}
The above results show that if momentum enters a corridor with a zero velocity vector, it follows the lines of the GF, but at a different speed  given by the coefficient $\beta$. Note that the assumption that momentum enters the corridor with zero velocity (i.e, $v_0 = 0$) is key to the proof. A similar proof can be repeated by assuming the initial velocity to be aligned to the initial gradient corridor (i.e., $v_0 = \lambda g(\theta_0)$) allowing us to use Lemma \ref{lemma:cost_grad}. However, outside of these conditions, momentum may no longer stay on the lines of steepest descent and may start drifting away from the GF. This gives an edge to momentum compared to GD since implicit regularization  \cite{barrett2021implicit,rosca2023on} can occur.
\end{remark}

\begin{remark} Inside corridors momentum with initial zero velocity follows the ODE \textit{exactly}:
\begin{align}
\dot{\theta} = - \frac{1 - \beta^n}{1-\beta} g(\theta(t)), \hspace{2em} t \in [hn, h(n+1)] 
\end{align}
\end{remark}
\begin{proof}
Since this ODE is of the form $\dot{\theta} = - c_n g(\theta(t)), \hspace{2em} t \in [hn, h(n+1)]$, we have that just like under the GF, under this ODE $\dot{g(\theta)} = - c_n H(\theta(t)) g(\theta) = 0$, and thus as long as a corridor is not exited, under the trajectory of this ODE the gradient is constant in time. 
We then have:
\begin{align}
    \theta(n h) = \theta((n-1) h) - h \frac{1 - \beta^n}{1-\beta} g(\theta(t))
    =  \theta((n-1) h) - h \frac{1 - \beta^n}{1-\beta} g(\theta(0))
\end{align}
which when expanded in $n$ leads to the momentum equation in corridors in Eq~\ref{eq:mom}.
This results shows that the implicit regularization terms found for momentum by ~\citet{ghosh2023implicit} vanish, just like for GD, and that momentum follows the same trajectory as GD, but at a higher speed (given by the coefficient $\beta$).
\end{proof}

\subsection{RK4 follows the GF in corridors}\label{section:rk4}
In this section, we investigate the behaviour of RK4 inside corridors. 

We have that the RK4 update is 
\begin{align}
    k_1 &= - g(\theta_{t-1}) \\
    k_2 &= - g(\theta_{t-1} + \frac h 2 k_1) \\
    k_3 &= - g(\theta_{t-1} + \frac h 2 k_2) \\
    k_4 &= - g(\theta_{t-1} + h k_3) \\
    \theta_{t} &= \theta_{t-1} + \frac{1}{6}(k_1 + 2 k_2 + 2 k_3 + k_4)
\end{align}

Since we have that inside a corridor $g(\theta - t g(\theta)) = g(\theta), \forall t \ge 0$, it follows that $k_1 = k_2 = k_3 = k_4 = - g(\theta_{t-1})$, and thus $\theta_{t} = \theta_{t-1} - g(\theta_{t-1})$, which is exactly the GF trajectory inside a corridor. As with GD, we find that Rk4 follows the GF exactly inside corridors.

\subsection{Stochastic Gradient Descent}\label{section:sgd}

We now study the implicit regularization effect of SGD inside corridors. \citet{smith2021on}
show that after one epoch SGD iterates, in expectation over all possible batch shufflings $\sigma$, follow a modified flow with an error $\mathcal{O}(m^3h^3)$:
\begin{align}
    E_{\sigma}\left[\theta_m \right] &= \theta(mh) + \mathcal{O}(m^3h^3), \hspace{3em} \text{with} \\
    \dot{\theta} &= - g(\theta) - \frac{h}{4} \nabla_{\theta} \|g(\theta)\|^2 -  \frac{h}{4m} \sum_{i=1}^{m}\nabla_{\theta} \|g(\theta) - g_i(\theta) \|^2
\end{align}
where $m$ is the number of updates in an epoch and $g_i$ is the gradient corresponding to batch $i$.

Since $\frac{1}{2}\nabla_{\theta} \|g(\theta)\|^2 = H(\theta)g(\theta)$, and $H(\theta)g(\theta)=0$ inside a corridor (Theorem~\ref{thm:corridor}), this entails that the only regularization pressure of order $\mathcal{O}(h^2)$ left from SGD is the minimisation of $\nabla_{\theta} \|g(\theta) - g_i(\theta) \|^2$.

\section{Additional experimental results}\label{section:additional_experiments}

\begin{figure}[ht]
\centering
\includegraphics[width=0.49\textwidth]{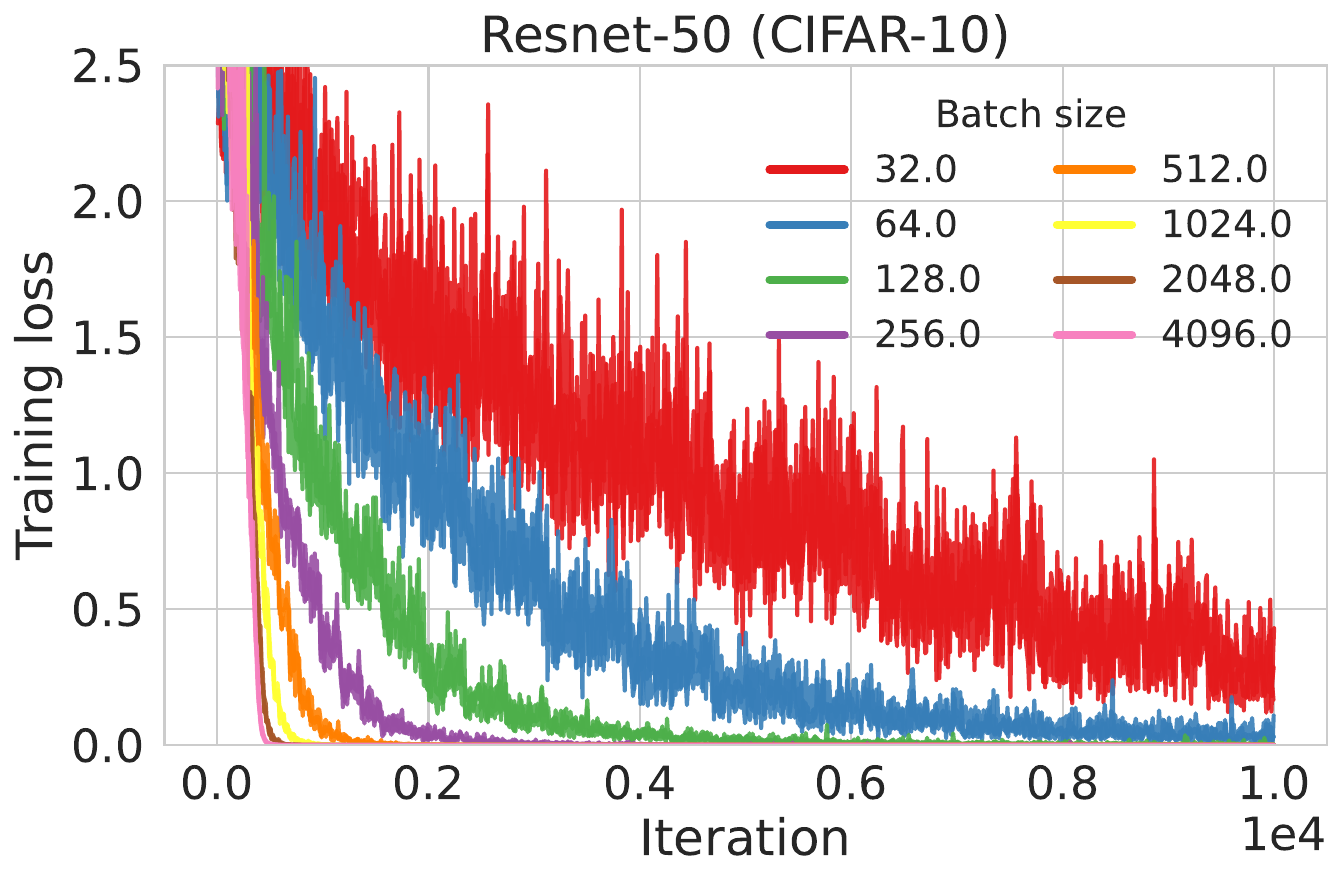}
\includegraphics[width=0.49\textwidth]{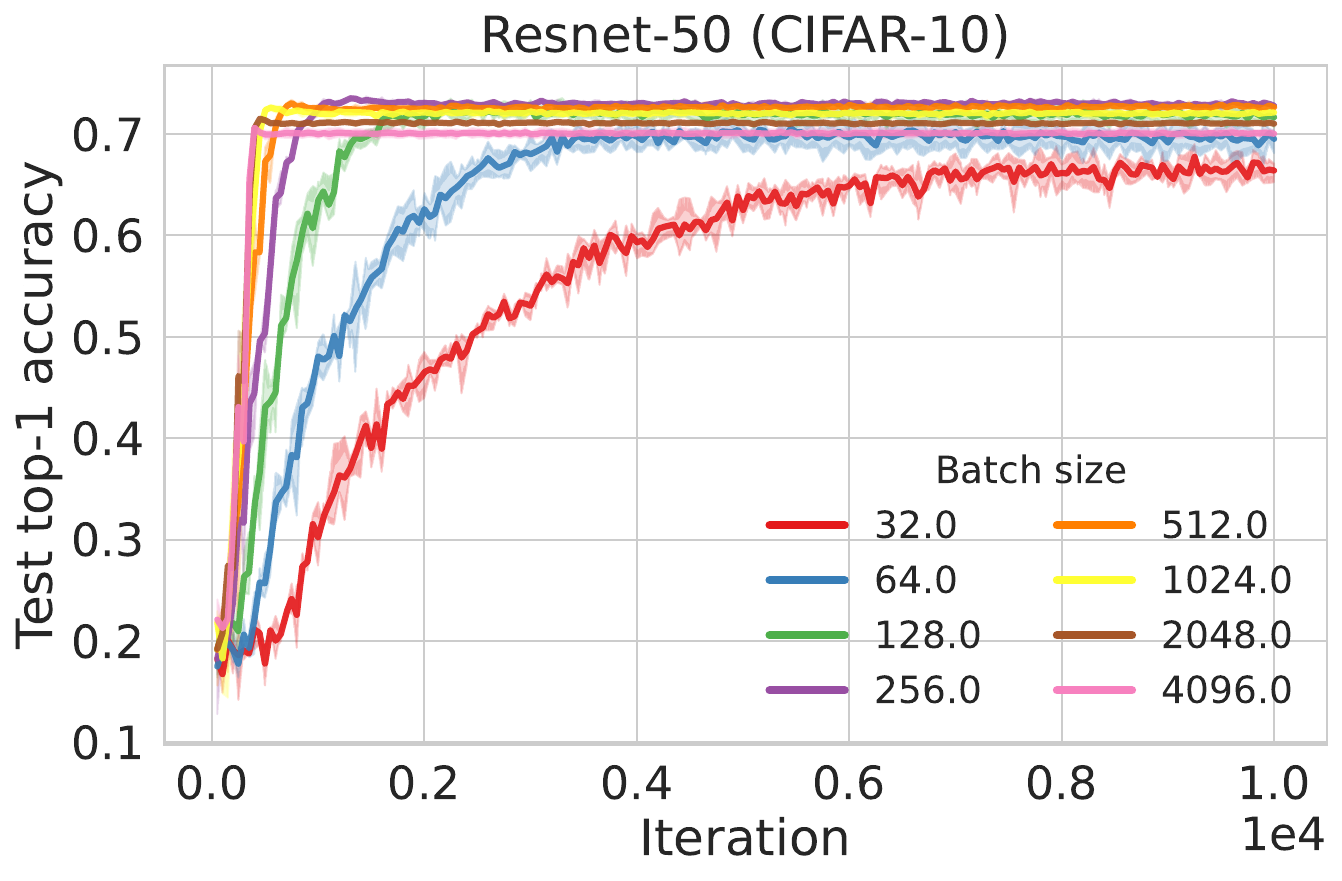}
\caption{The corridor learning rate converges for a ResNet-50 trained on CIFAR-10.}
\end{figure}

\begin{figure}[ht]
\centering
\includegraphics[width=0.49\textwidth]{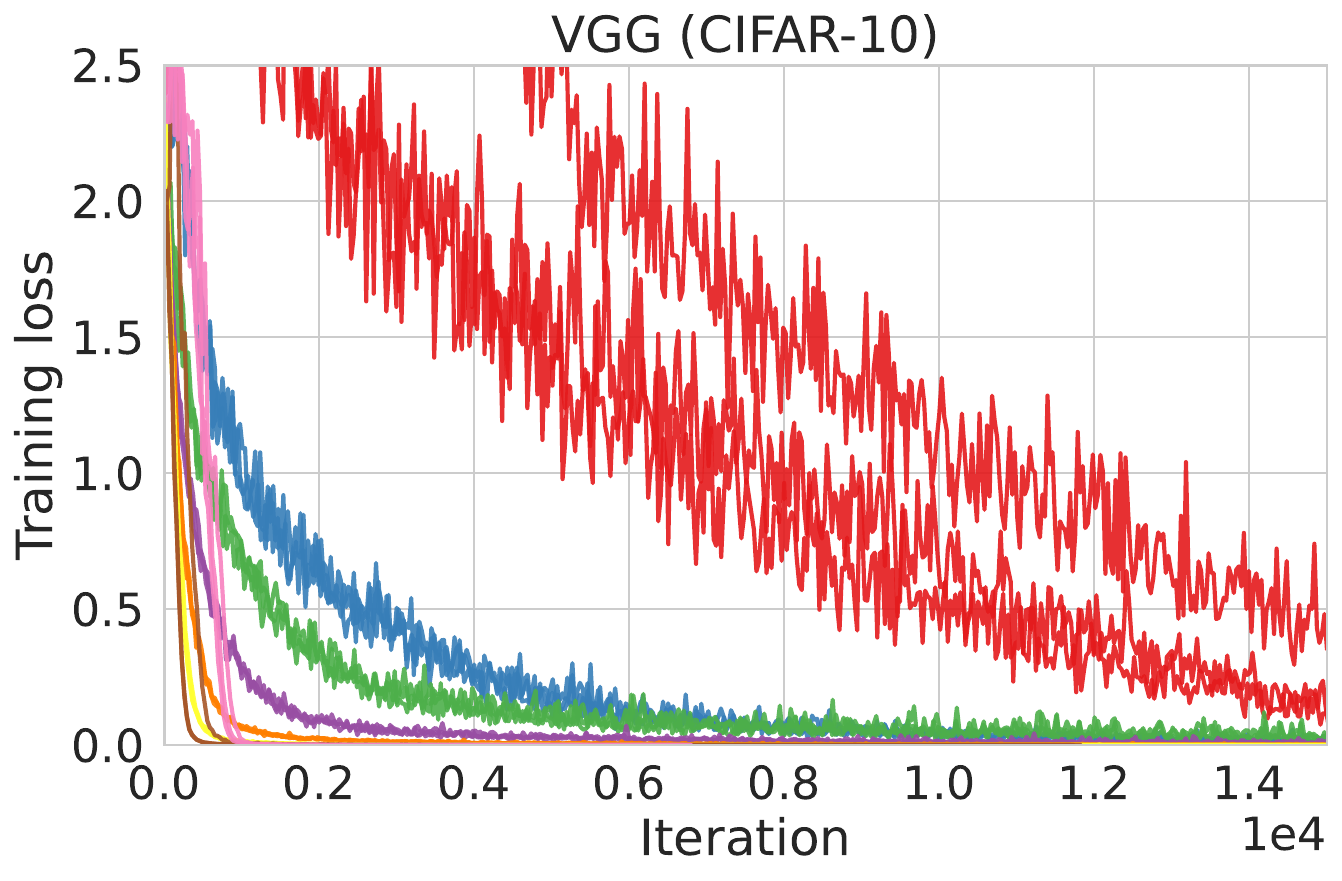}
\includegraphics[width=0.49\textwidth]{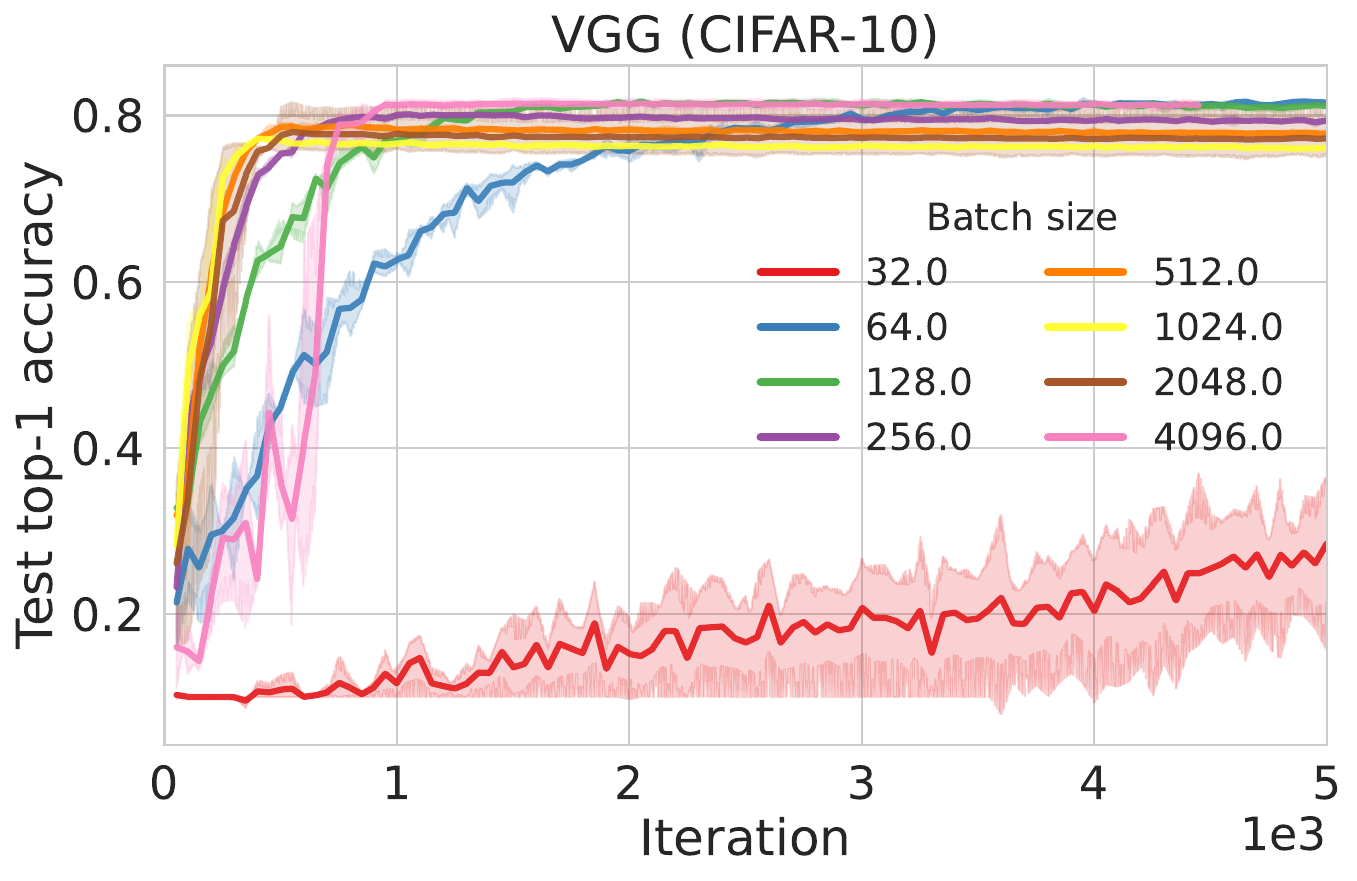}
\caption{The CLR converges for a VGG trained on CIFAR-10. Results obtained using 3 seeds.}
\end{figure}

\begin{figure}[ht]
\centering
\includegraphics[width=0.32\textwidth]{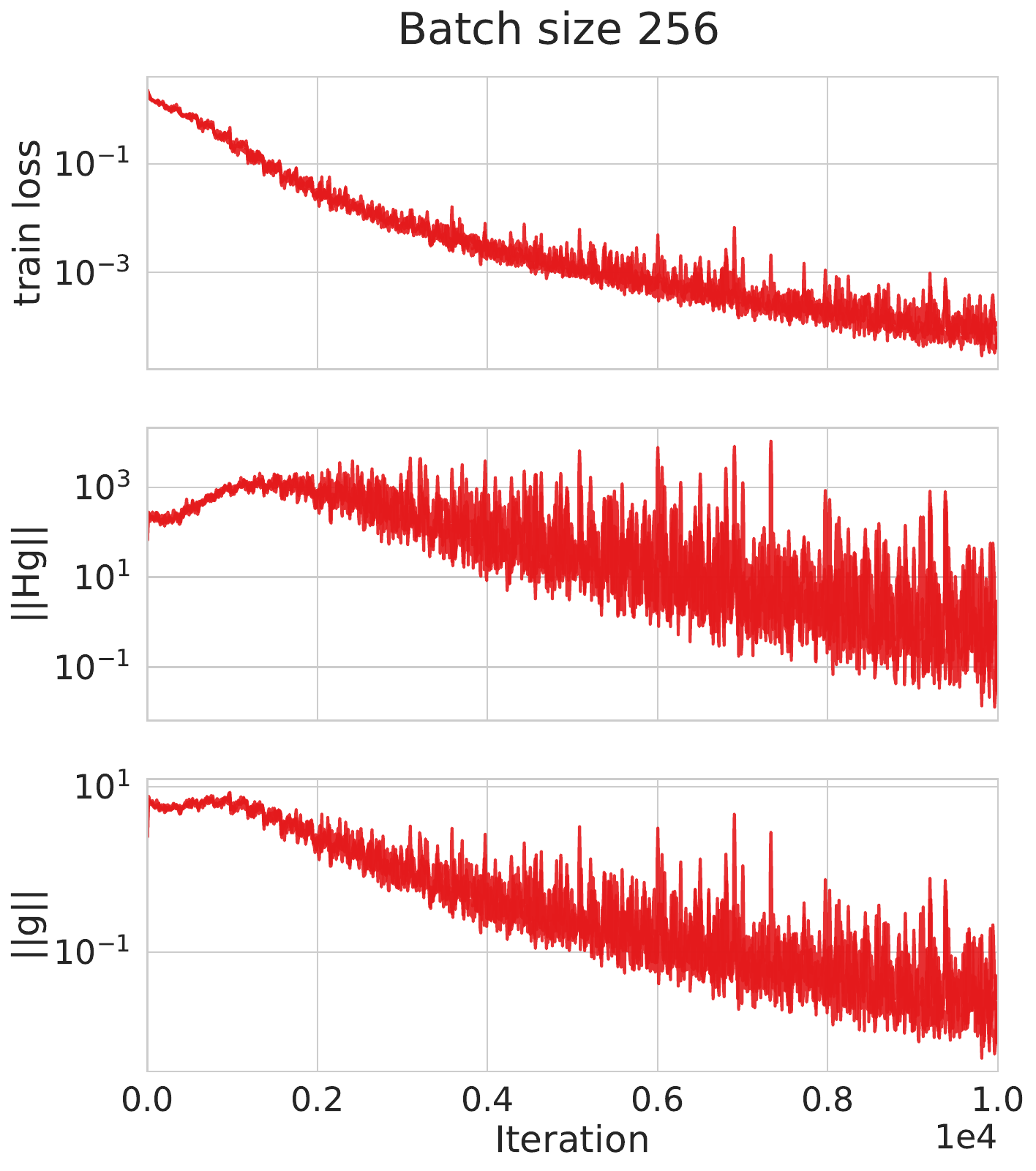}
\includegraphics[width=0.32\textwidth]{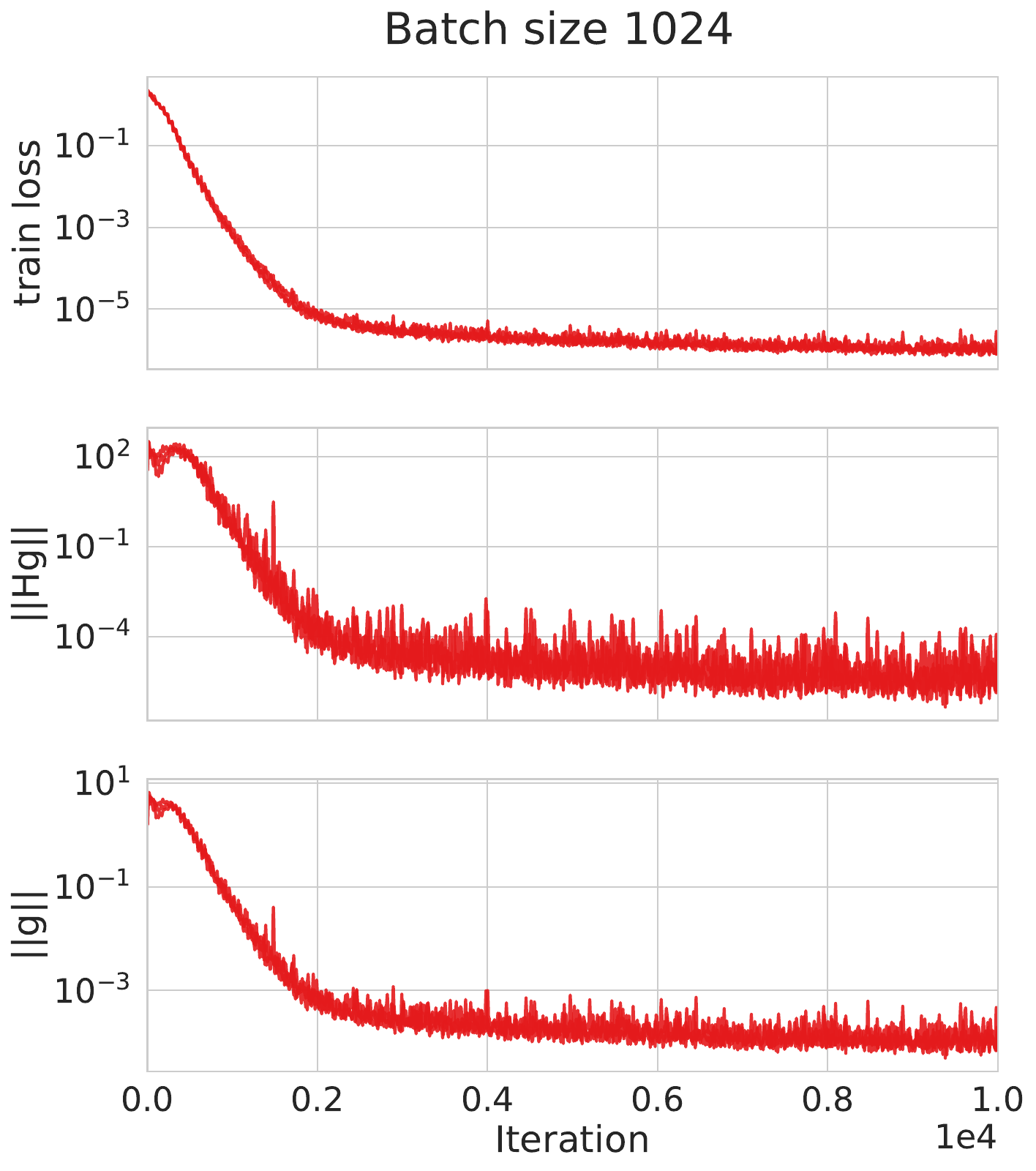}
\includegraphics[width=0.32\textwidth]{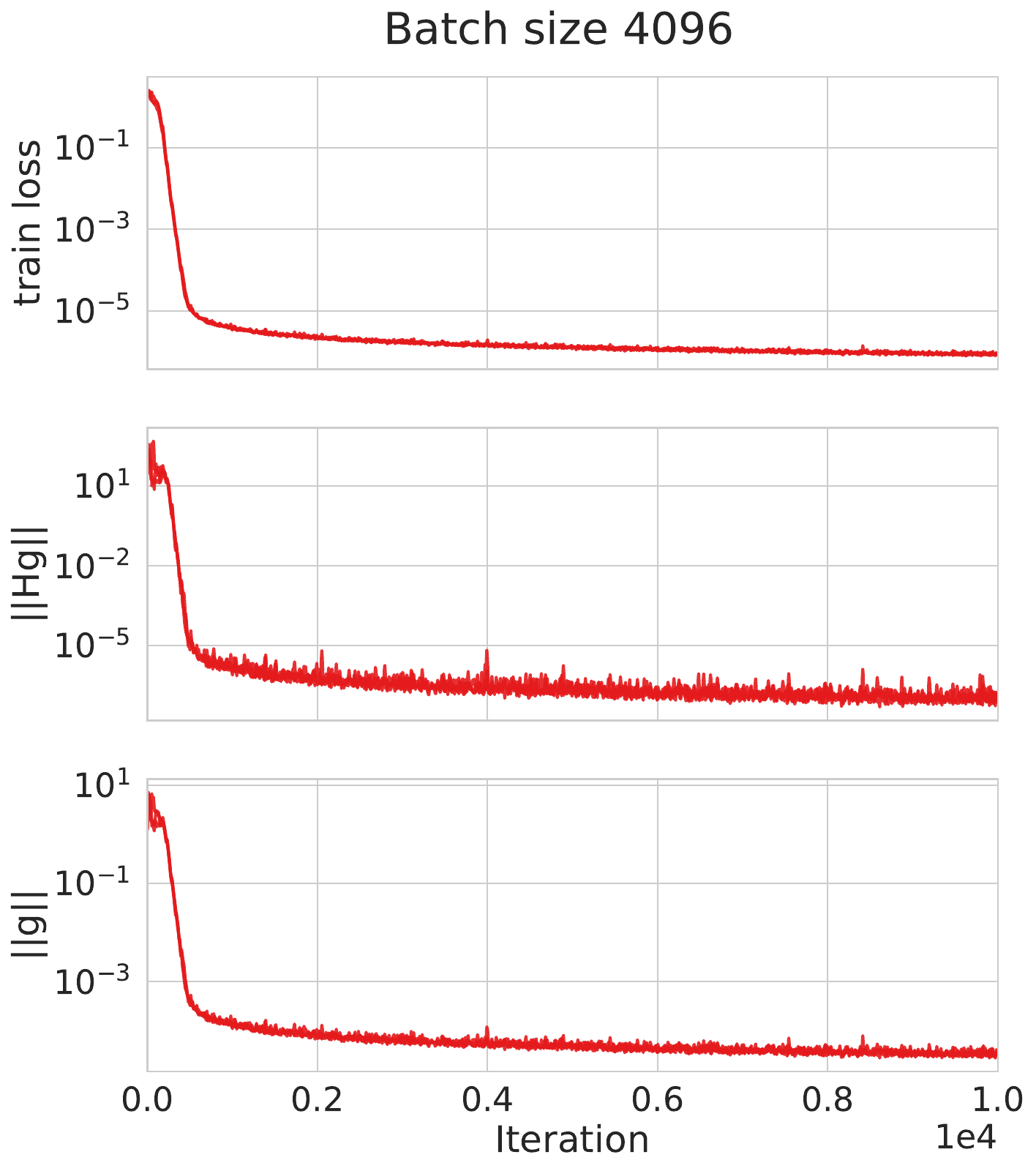}
\caption{Loss and $H(\theta)g(\theta)$ across a trajectory obtained with the learning rate adaptation scheme for a ResNet-18 trained on CIFAR-10. Results obtained using 3 seeds.}
\label{fig:cifar_resnet_18_hg}
\end{figure}

\begin{figure}[ht]
\centering
\includegraphics[width=0.49\textwidth]{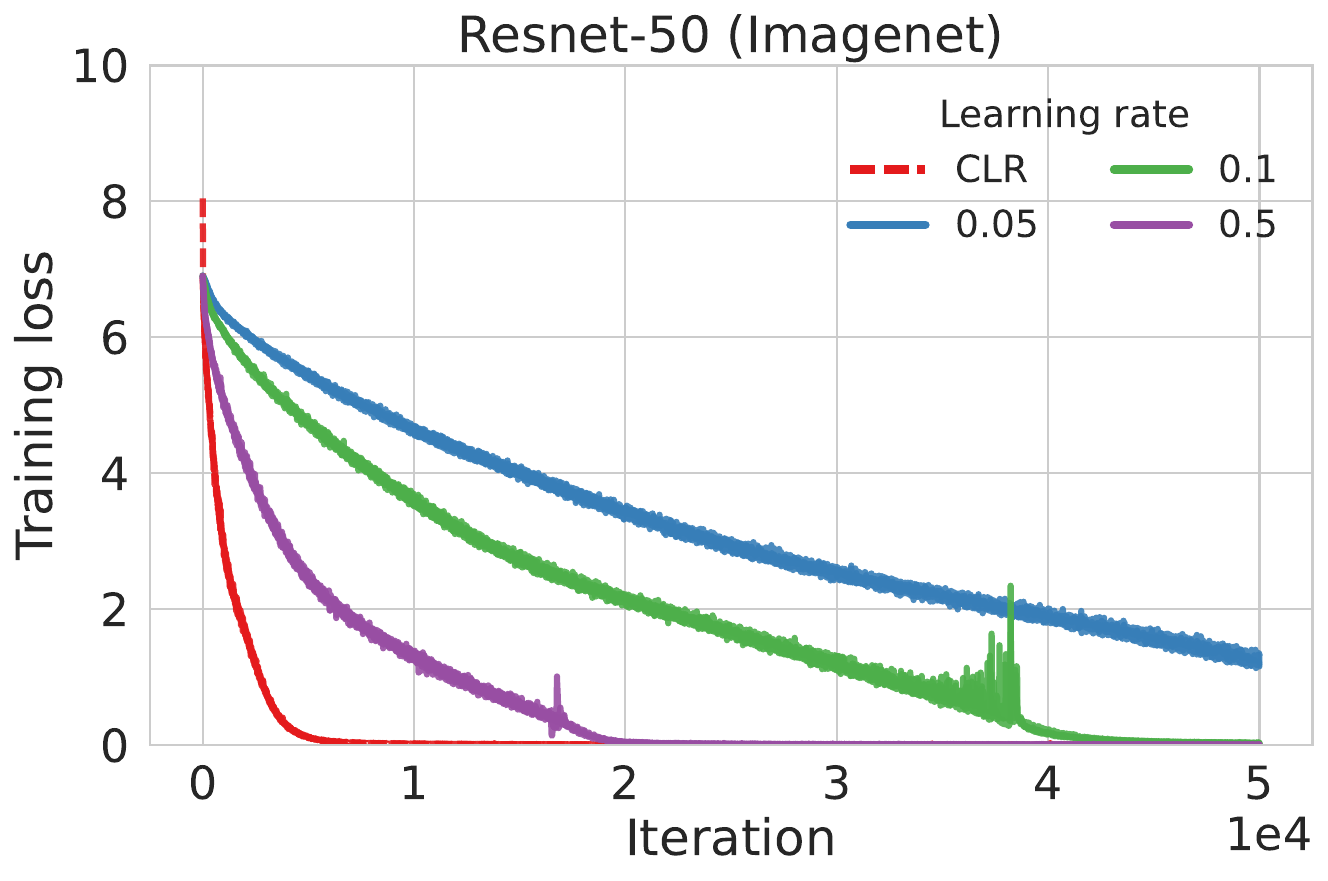}
\includegraphics[width=0.49\textwidth]{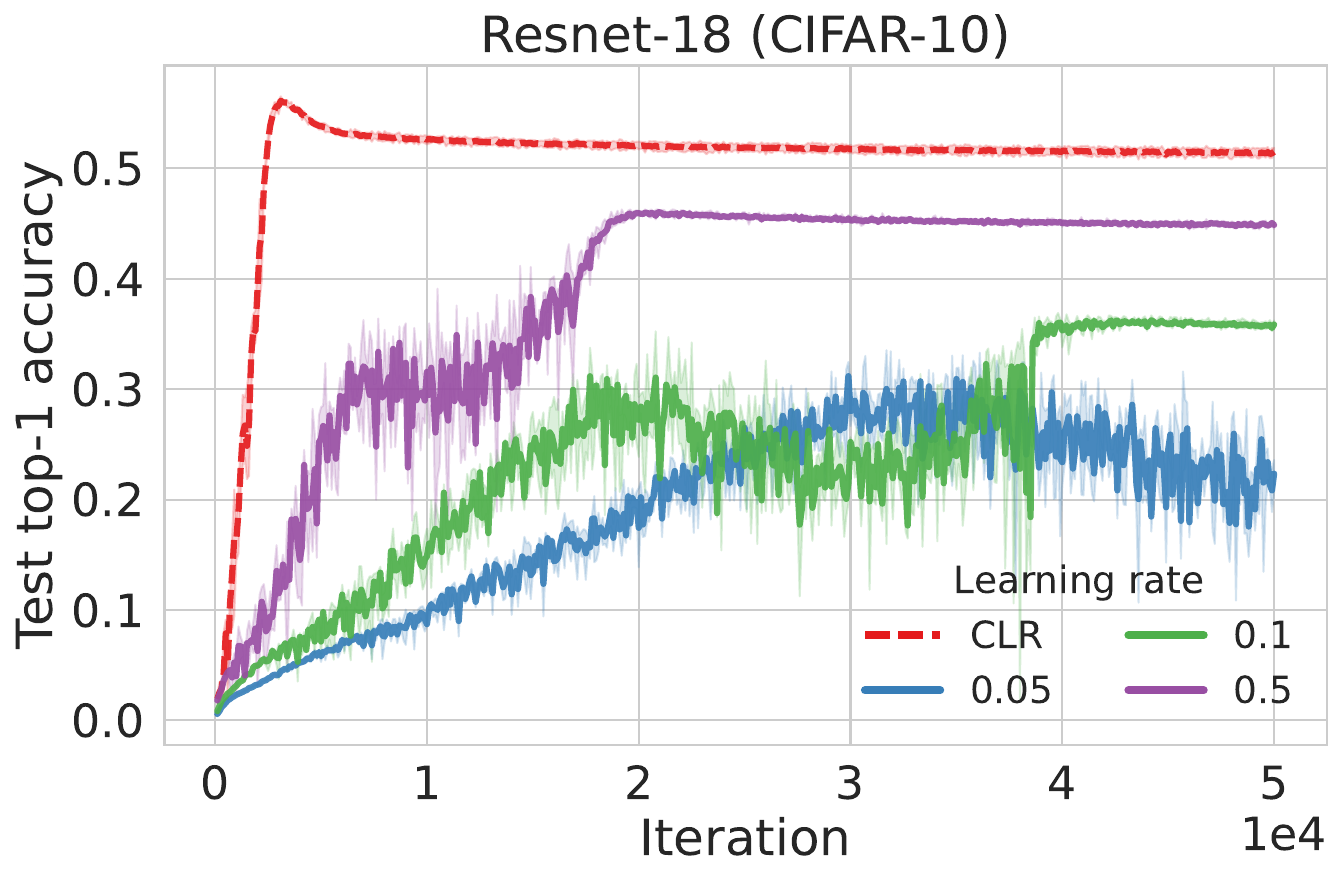}
\caption{The CLR converges for a ResNet-50 trained on Imagenet, and it does so quicker that vanilla SGD (curves with fixed learning rates). We show that the same setting the CLR converges across a wide range of batch sizes in Figure~\ref{fig:cifar_10_resnet_18_batch_size_sweep}. Results obtained using 3 seeds.}
\label{fig:imagenet_comp}
\end{figure}

\begin{figure}[ht]
\centering
\includegraphics[width=0.49\textwidth]{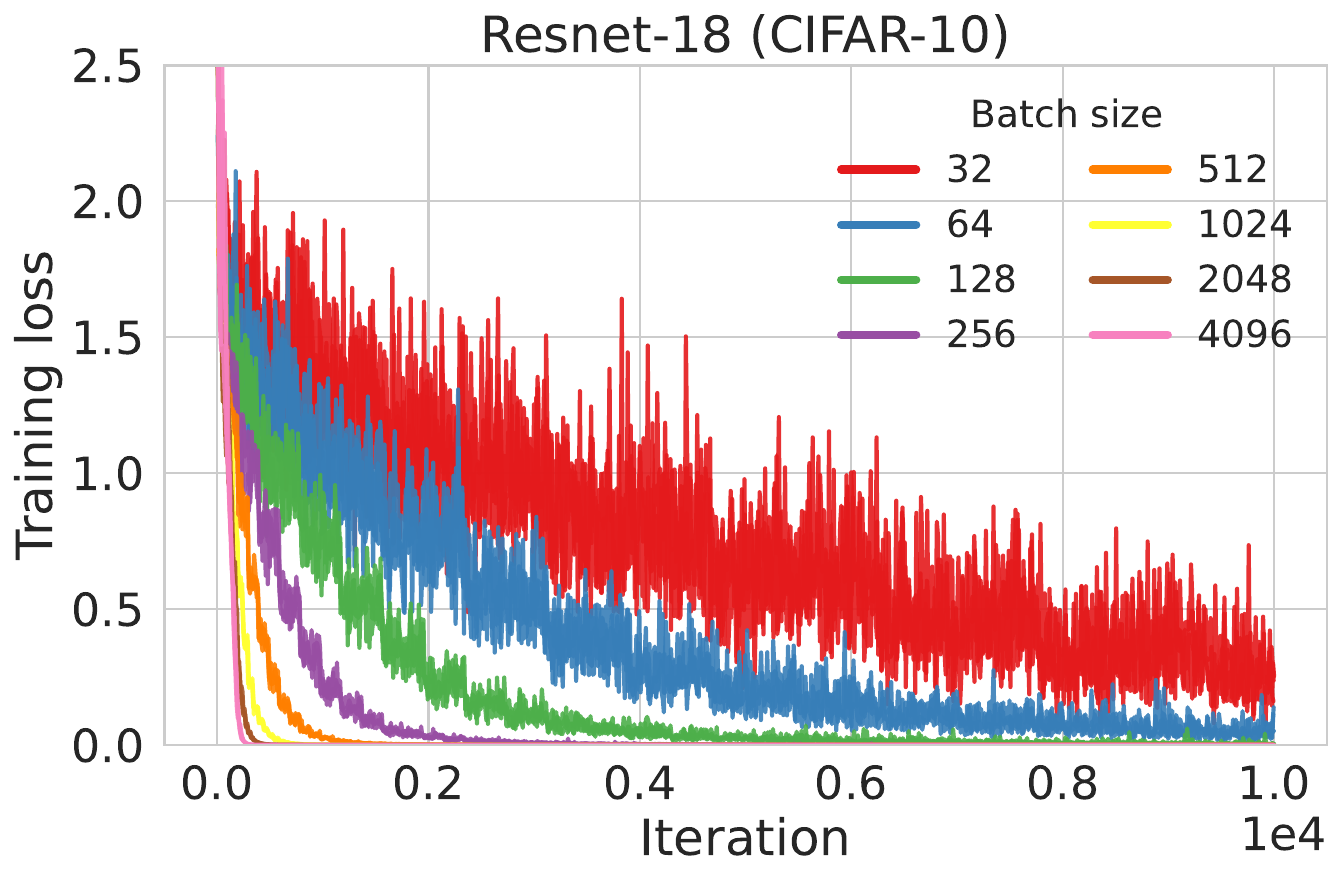}
\includegraphics[width=0.49\textwidth]{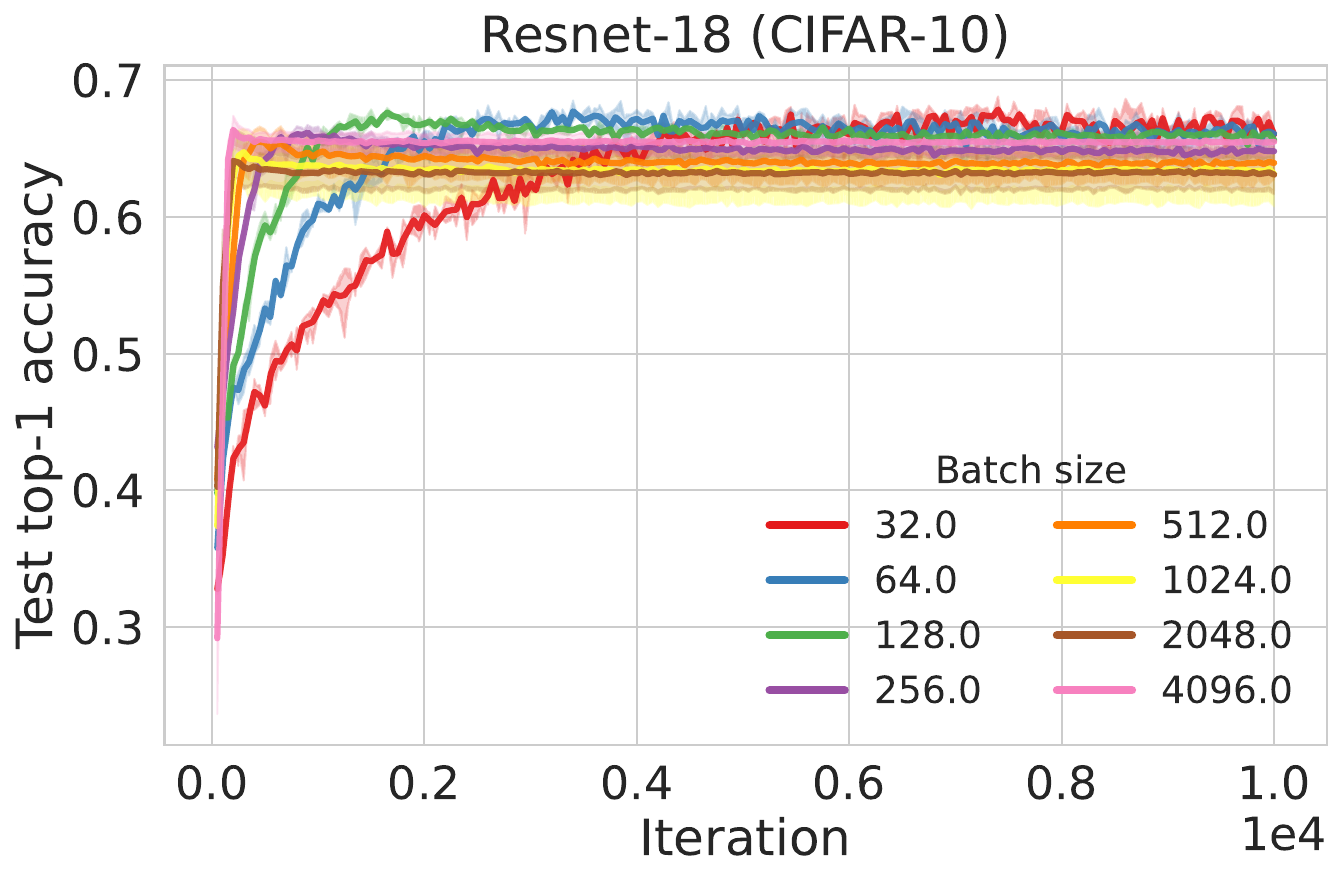}
\caption{The CLR converges for a ResNet-18 trained on CIFAR10.}
\label{fig:cifar_10_resnet_18_batch_size_sweep}
\end{figure}

\newpage

{\color{black}

\section{Quadratic functions do not have corridors} \label{appendix:quadratic}
In this section, we show that quadratic functions which display corridors are linear.

Assume $E(\theta) = \frac{1}{2} \theta^T A \theta + b \theta + c$. We then have $\nabla_{\theta} E = A \theta + b $ and $\nabla^2_{\theta} E = A$. For $U$ to be a corridor, we need that $\nabla^2_{\theta} E \nabla_{\theta} E = A (A \theta + b) = 0, \forall \theta \in U$. Consider $\lambda_i, u_i$ be the $i$'th largest eigenvalue and eigenvector of $A$.

We have that 
\begin{align}
   A (A \theta + b) =  \sum_i \left(\lambda_i^2 {u_i}^T \theta + \lambda_i u_i^T b\right) u_i = 0, \forall \theta \in U
\end{align}

We then have that $\forall i$ either
\begin{itemize}
    \item $\lambda_i = 0$
    \item $\lambda_i {u_i}^T \theta + u_i^T b = 0$
\end{itemize}

if $\lambda_i = 0 \forall i$, we have that $A = 0$ and $E$ is a linear function.
If $A \ne 0$, we have that 
\begin{align}
    \nabla_{\theta} E = A \theta + b  = \sum_i (\lambda_i \theta^T u_i + b^t u_i) u_i = \sum_{i, \lambda_i = 0}  b^t u_i u_i
\end{align}
which is a constant, and again we have that $E$ is linear in $\theta$ in $U$.
}

\end{document}